\theoremstyle{plain}
\newtheorem{theorem}{Theorem}[section]
\newtheorem{proposition}[theorem]{Proposition}
\theoremstyle{definition}
\theoremstyle{remark}
\title{Reasoning Planning for Language Models}
\author{%
  {\bfseries Bao Nguyen$^1$\hspace{15pt} Hieu Trung Nguyen$^1$} \\
    {\bfseries Ruifeng She$^2$ \hspace{15pt}Xiaojin Fu$^2$ \hspace{15pt} Viet Anh Nguyen$^1$} \\[2mm]
    $^1$ The Chinese University of Hong Kong \\
    $^2$ Huawei Noah’s Ark Lab \\
  \texttt{nbnguyen@se.cuhk.edu.hk, thnguyen@se.cuhk.edu.hk} \\
\texttt{she.ruifeng@huawei.com, fuxiaojin32@hotmail.com, nguyen@se.cuhk.edu.hk}
}
\newcommand{\be}{\begin{equation}}
\newcommand{\ee}{\end{equation}}
\begin{document}

\maketitle

\begin{abstract}
    
Selecting an appropriate reasoning method for a given query remains a key challenge in language model generation. Existing approaches typically generate multiple candidate responses and use an aggregation strategy to select the output answer, often assuming that more candidate answers yield higher accuracy. We revisit this assumption through a rigorous theoretical analysis, deriving accuracy bounds for standard aggregation methods under fixed generation distributions and candidate sizes. Building on these insights, we introduce EPIC, an \textbf{E}nsemble \textbf{P}lann\textbf{I}ng with \textbf{C}ontrastive learning framework to learn a shared representation space that captures both model reasoning abilities and query-method compatibility. EPIC incorporates our probability bounds as a regularizer in a utility-driven optimization that balances accuracy and computational cost. Experiments on diverse mathematical reasoning tasks show that EPIC consistently selects optimal reasoning methods, improving accuracy while reducing computational overhead. Our code can be found at \url{https://github.com/nguyenngocbaocmt02/EPIC}.
\end{abstract}

\section{Introduction}
Large Language Models (LLMs) have demonstrated remarkable abilities to understand and reason in human natural language. These advancements have transformed applications, including travel planning~\citep{ref:xie2024travelplanner}, AI teaching platforms~\citep{ref:jin2024teach}, and human population simulations~\citep{ref:park2023generative, ref:bui2025mixture}. However, even with a pre-trained LLM, the computational expense of serving LLM-powered systems remains a significant bottleneck due to the massive scale of the models~\citep{ref:lin2024awq}, the quadratic complexity of the attention mechanism~\citep{ref:dao2022flashattention}, and the token-by-token nature of auto-regressive generation~\citep{ref:zhang2025pi}. This high computational cost significantly hinders the broader application of LLMs in practical scenarios, particularly in resource-constrained environments such as edge devices, real-time applications, and small-scale businesses.

This challenge becomes even more pronounced in tasks that require advanced reasoning, such as automated theorem proving~\citep{ref:wu2022autoformalization}, mathematical problem solving~\citep{ref:trinh2024solving}, code generation~\citep{jiang2024survey,li2025s}, or heuristic discovery~\citep{ref:romera2024mathematical}. LLMs often fail to produce accurate responses in these scenarios in a single pass. Instead, they rely on iterative generation strategies combined with aggregation or search techniques, such as best-of-N sampling~\citep{stiennon2020learning} or Monte Carlo Tree Search~\citep{ref:xie2024monte}, to refine and select the most appropriate response. Throughout this paper, we refer to these iterative strategies as Reasoning Methods.

A key limitation of current approaches lies in their static application of reasoning methods, where the same technique is applied uniformly across all user queries. However, not all reasoning methods are equally effective or efficient for every query. This observation leads to our central research question: Could we select the most suitable reasoning method for a given user query to balance the trade-off between accuracy and efficiency \textit{before} generating the answer?

As a starting point, we consider a universe of methods, denoted by $\mathcal M$. Each technique in $\mathcal M$ is formally characterized by a tuple $(\mathrm{LM}, \mathrm{ReStrat}, \mathrm{Agg},\mathrm{Config}, N)$, where $\mathrm{LM}$ denotes the base language model, $\mathrm{ReStrat}$ denotes the reasoning strategy (e.g. Monte Carlo Tree Search, Beam Search, Best-of-N), $\mathrm{Config}$ is a collection of relevant configuration parameters (e.g., temperature of a sampling-based decoding strategy), $\mathrm{Agg}$ denotes an aggregation technique (e.g., majority voting or score-based voting), and $N$ is the number of candidate answers for aggregation. Importantly, this formulation is broad enough to subsume a wide variety of test-time compute methods~\citep{ref:snell2025scaling}, ranging from simple prompting techniques~\citep{ref:wei2022chain, ref:yao2023tree,ref:brown2020language} and standard decoding strategies~\citep{ref:xie2024monte, wang2022self} to more specialized intervention-based approaches~\citep{ref:li2023inference,ref:nguyen2025task,ref:nguyen2025structured}. However, in this work, we focus on a representative subset of methods rather than exhaustively covering the entire space.

\textbf{Contributions.} We introduce EPIC, an \textbf{E}nsemble \textbf{P}lann\textbf{I}ng with \textbf{C}ontrastive learning framework that recommends matching an input question and an appropriate reasoning method in the universe of methods $\mathcal M$. EPIC learns \textit{jointly} the embedding of each reasoning method and a neural mapping from the input question to the embedding space. Two main components guide the learning process:
\begin{itemize}[leftmargin=5mm]
    \item a contrastive loss, which pulls the question embedding towards the reasoning method with the highest utility for that question. The utility value is composed of a weighted combination of the accuracy and the inference cost, measured by the number of tokens generated. The user controls the accuracy-cost trade-off through a scalar parameter, balancing the preferences across different conflicting deployment criteria.
    \item a regularizer term, which exploits the commonality among methods that share four components $(\mathrm{LM}, \mathrm{ReStrat},  \mathrm{Config}, \mathrm{Agg})$, but differ only by the number of candidate answers $N$. This regularizer aims to improve the sample efficiency of the training procedure by grounding these methods relatively on the scale of $N$.
\end{itemize} 
At inference time, EPIC maps the test-time input question to the embedding space and selects the reasoning method with the highest similarity (or scores) for answer generation. Extensive experiments on the MATH dataset demonstrate EPIC's advantage: compared to individual reasoning models in the universe of methods, EPIC can reduce the number of tokens (or cost) by 75\% while maintaining the same level of accuracy.

Our paper unfolds as follows: Section~\ref{sec:related} discusses related work on LLM reasoning. Section~\ref{sec:analysis} studies the probabilistic bounds of common aggregation methods. Section~\ref{sec:epic} delineates our EPIC framework for matching reasoning methods with input questions, and Section~\ref{sec:exp} presents the extensive numerical results of the mathematical reasoning task. 

\section{Related Work} \label{sec:related}
We review advances in LLM reasoning algorithms and inference-time scaling, highlighting their emerging impact on output quality and computational efficiency.

\textbf{Reasoning algorithms and inference-time scaling.} A naive reasoning process may not generate the correct solutions for complex reasoning tasks. To identify and choose the correct solution within the distribution, Self-Consistency (SC) samples multiple outputs from the LLM and selects the final response by majority voting~\citep{wang2022self}. Another similar approach is best-of-N sampling, which uses a reward model or function to choose the answer with the highest reward \citep{stiennon2020learning}. Both methods enhance the quality of the output, but increase the computational cost by a factor of sampling times. To explore potential reasoning paths, tree-search-based methods are proposed, such as Tree-of-Thought~\citep{yao2024tree}, Monte Carlo Tree Search (MCTS)~\citep{ref:wan2024alphazerolike, ref:zhang2024rest, guan2025rstar}, Forest-of-Thought~\citep{ref:bi2025forestofthought}.  \cite{ref:damani2025learning} indicates that searching over a tree structure is more effective in discovering a correct solution than simply sampling responses in parallel for more complex tasks. 

Despite applying different reasoning methods to problems with various levels of complexity, inference-time scaling on these methods also significantly improves the output quality. 
\cite{ref:beeching2024scalingtesttimecompute} demonstrates that the accuracy on the MATH-500 benchmark improves as the amount of test-time computation (number of generations per problem) increases for algorithms such as best-of-N, beam search, and diverse verifier tree search (DVTS). \cite{guan2025rstar} conduct extensive MCTS rollouts and achieve an average accuracy of 53.3\% on 15 questions of AIME24 benchmark.

\textbf{Cost-effective reasoning.} Although inference-time scaling significantly enhances LLM's reasoning capabilities, this approach incurs substantial computational overhead and often leads to inefficient use of computational resources. Recent work finds that performance gains from various inference-time scaling strategies exhibit significant variability across different levels of prompt difficulty \citep{ref:snell2025scaling}. Drawing from this evidence, they effectively allocate inference-time compute according to question difficulty, with four times less computation than the best-of-N baseline. However, the method incurs considerable computational costs to assess question difficulty. \cite{ref:damani2025learning} train lightweight probes built upon LLM's hidden representations to quickly predict if allocating more computation to a question will improve the response quality. To efficiently scale best-of-N sampling, \cite{manvi2024adaptive} introduces a highly cost-effective self-evaluation paradigm that does not rely on an external reward model, incurring costs only from generating a single token. 

Whereas most studies focus on effectively and efficiently scaling a particular reasoning algorithm, we focus on pairing suitable reasoning methods with various questions, considering both accuracy and cost. We conduct our study based on OpenR~\citep{ref:wang2024openr}, an open-source framework for LLM reasoning that integrates multiple strategies, including greedy decoding, best-of-N, beam search, and MCTS. 

\section{Probabilistic Analysis of Aggregation Accuracy} \label{sec:analysis}

We observe that many methods in the universe $\mathcal M$ could share common features: they could use the same base language model, reasoning strategy, configuration parameters, and aggregation methods, and they could differ by only the amount of test-time compute, or how many samples $N$ they need to generate before aggregation. To exploit this information, we first need to understand how different sample sizes $N$ affect the quality of the output. We analyze the probabilistic performance of an aggregation method for a specific question $q$ as the number of samples $N$ varies. All probability quantities in this section are conditioned on $q$, but this condition is omitted to avoid clutter. Let $\tilde{Y}$ be a random variable representing the final answer extracted from a sampled solution to a question $q$ generated by a model. Importantly, $\tilde{Y}$ refers specifically to the final answer, not the reasoning process or steps leading to it. In practice, the model is trained to enclose $\tilde{Y}$ in a LaTeX box to make extraction easier. We suppose that the support set of $\tilde Y$ is finite: $ \mathcal{Y} = \{y_1, y_2, \dots, y_K\}$. The stochastic generation process of a model specified by the tuple $(\mathrm{LM}, \mathrm{ReStrat}, \mathrm{Config})$ produces a probability distribution over $\mathcal{Y}$:
\[
\Pr(\tilde{Y} = y_k )= p_k, \quad \text{where} \quad \sum_{k=1}^K p_k = 1 \text{ and } p_k \ge 0 \quad \forall k .
\]
Without any loss of generality, we denote $y_1 \in \mathcal{Y}$ as the only correct answer to the question $q$. After sampling $N$ independent samples following the above distribution, an aggregation method $\mathrm{Agg}$ is applied to obtain the output answer. We focus on characterizing the probability that the output answer is $y_1$, which means that the output answer is a correct solution to question $q$.

\subsection{Majority Voting} \label{sec:majority}

Given $N$ samples generated by the language model, majority voting counts the frequency of each unique answer among $N$ candidate answers. Then it outputs the answer with the highest count as the output answer. We refer to this aggregation method as Majority\_Vote. We have the following result.

\begin{theorem}[Majority voting] \label{thm:majority-voting}
    If $p_1 > p_k$ for all $k = 2, \ldots, K$, then 
    \begin{subequations}
    \begin{equation} \label{eq:majority-1}
    \Pr(\text{Majority\_Vote picks } y_1 ) \ge 1 - \sum_{k=2}^K e^{-N\left(\sqrt{p_1} - \sqrt{p_k}\right)^2}. 
    \end{equation}
    If $p_1 < p_k$ for some $k = 2, \ldots, K$, then 
    \begin{equation} \label{eq:majority-2}
     \Pr(\text{Majority\_Vote picks } y_1)  
    \leq e^{-N\left(\sqrt{p_{k}} - \sqrt{p_1}\right)^2}.
    \end{equation}
    \end{subequations}
\end{theorem}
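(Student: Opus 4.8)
The plan is to reduce the multivariate comparison that defines majority voting to a collection of pairwise binomial comparisons, and then to control each pairwise comparison with a Chernoff bound whose optimal exponent turns out to be exactly the squared Hellinger-type gap $(\sqrt{p_1}-\sqrt{p_k})^2$. Throughout, let $C_k$ denote the number of the $N$ i.i.d.\ samples equal to $y_k$, so that $(C_1,\dots,C_K)$ is a single multinomial draw.

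For the lower bound \eqref{eq:majority-1}, I would first observe that if Majority\_Vote fails to return $y_1$, then some competitor must weakly beat it, so the failure event is contained in $\bigcup_{k=2}^K\{C_k \ge C_1\}$. This containment holds regardless of the tie-breaking rule, since a non-selected $y_1$ necessarily satisfies $C_1 \le C_k$ for whichever $k$ is selected. A union bound then reduces the task to bounding $\Pr(C_k \ge C_1)$ for each fixed $k$ with $p_1 > p_k$.

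For a single such comparison, I would exploit that both counts come from the same multinomial draw: assign to sample $i$ the value $Z_i = +1$ if it equals $y_k$, $Z_i = -1$ if it equals $y_1$, and $Z_i = 0$ otherwise, so that $C_k - C_1 = \sum_{i=1}^N Z_i$ with the $Z_i$ i.i.d. A Chernoff bound gives, for any $t>0$,
\[
\Pr(C_k \ge C_1) = \Pr\Big(\sum_i Z_i \ge 0\Big) \le \big(p_1 e^{-t} + p_k e^{t} + (1 - p_1 - p_k)\big)^N.
\]
Minimizing the bracket over $t$ yields $e^{t^\star} = \sqrt{p_1/p_k}$, at which point the bracket collapses to $1 - (\sqrt{p_1}-\sqrt{p_k})^2$; applying $1 - x \le e^{-x}$ converts this into $e^{-N(\sqrt{p_1}-\sqrt{p_k})^2}$, and summing over $k$ gives \eqref{eq:majority-1}. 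The upper bound \eqref{eq:majority-2} follows from the same pairwise estimate with the roles of $1$ and $k$ swapped: since selecting $y_1$ forces $C_1 \ge C_k$, we have $\Pr(\text{picks } y_1) \le \Pr(C_1 \ge C_k) \le e^{-N(\sqrt{p_k}-\sqrt{p_1})^2}$ whenever $p_k > p_1$.

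I expect the only delicate points to be the non-independence of the counts and the treatment of ties. Independence is sidestepped cleanly by working with the per-sample encoding $Z_i$ rather than with $C_1, C_k$ directly, so that the Chernoff moment generating function factorizes exactly; ties are absorbed by using the weak inequalities $C_k \ge C_1$ throughout, which makes both containment arguments valid for any deterministic or randomized tie-breaking scheme. The algebraic simplification $2\sqrt{p_1 p_k} + 1 - p_1 - p_k = 1 - (\sqrt{p_1}-\sqrt{p_k})^2$ at $t^\star$ is the one computation worth carrying out carefully, as it is what produces the clean exponent.
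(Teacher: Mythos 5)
Your proposal is correct and follows essentially the same route as the paper: both reduce the majority-vote event to pairwise count comparisons via a union bound (handling ties with weak inequalities), encode each pairwise comparison as a sum of i.i.d.\ $\{-1,0,+1\}$ variables so the Chernoff MGF factorizes exactly, and optimize at $e^{2t^\star}=p_1/p_k$ to get the bracket $1-(\sqrt{p_1}-\sqrt{p_k})^2$ before applying $1-x\le e^{-x}$. The paper packages the pairwise estimate as a standalone proposition and reuses it with roles swapped for \eqref{eq:majority-2}, exactly as you do.
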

Note that the bound~\eqref{eq:majority-1} approaches $1$ as $N \to \infty$, which implies perfect accuracy. In contrast, the bound~\eqref{eq:majority-2} approaches $0$ as $N \to \infty$, implying a complete failure.

\subsection{Aggregation using Summation of Scores}

\label{sec:prm_vote}

Given $N$ samples generated by a model, we first pass them through a reward model to obtain a reward score for each sample. Two popular types of reward models are Outcome Reward Models (ORM)~\citep{ref:cobbe2021gsm8k,yu2023ovm}, which provide a single scalar reward for each complete solution trajectory, and Process Reward Models (PRM)~\citep{luo2024improve, lightman2023let}, which provide step-by-step feedback and aggregate it, typically by summing or taking the minimum, to obtain a final score for the sample. While PRM is used throughout this work, our method is flexible and can be applied with any reward models. 

For each unique answer, we sum the PRM scores of all samples that generate that answer. The final outcome is selected as the answer with the highest total (summed) reward score across all samples~\citep{ref:wang2024openr, ref:li2022making}.
We suppose that PRM returns a score for answer $y_k$ following a Gaussian distribution $\mathcal N(\mu_k, \sigma_k^2)$ for all $k$. We call this aggregation method PRM\_Vote. We have the following result.

\begin{theorem}[Voting with score sum] \label{thm:prm-vote}
    If $p_1 \mu_1 > p_k \mu_k$ for all $k = 2, \ldots, K$, then 
    \begin{subequations} \label{eq:majority-sum}
    \begin{equation} 
    \label{eq:majority-sum1}
\Pr (\text{PRM\_Vote picks } y_1) \geq 1 - \sum_{k=2}^K  \inf_{t_k > 0} \exp\left( N p_1 \left( e^{-t_k \mu_1 + \frac{1}{2} t_k^2 \sigma_1^2} - 1 \right) + N p_k \left( e^{t_k \mu_k + \frac{1}{2} t_k^2 \sigma_k^2} - 1 \right) \right).
\end{equation}
    If $p_1 \mu_1 < p_k \mu_k$ for some $k = 2, \ldots, K$, then 
    \begin{equation} \label{eq:majority-sum2}
     \Pr(\text{PRM\_Vote picks } y_1) \le \inf_{t > 0} \exp\left( N p_{k} \left( e^{-t \mu_{k} + \frac{1}{2} t^2 \sigma_{k}^2} - 1 \right) + N p_1 \left( e^{t \mu_1 + \frac{1}{2} t^2 \sigma_1^2} - 1 \right) \right).
    \end{equation}
    \end{subequations}
\end{theorem}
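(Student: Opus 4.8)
The plan is to reduce the multi-way comparison underlying PRM\_Vote to a family of pairwise comparisons between the summed score $S_1$ of the correct answer and the summed score $S_k$ of each competitor, and then to control each pairwise comparison with a Chernoff (exponential Markov) bound. I would write $S_k = \sum_{i=1}^N Z_i\,\mathbbm{1}[A_i = k]$, where $A_i \in \{1,\dots,K\}$ is the answer produced by the $i$-th sample (with $\Pr(A_i = k) = p_k$) and $Z_i \mid A_i = k \sim \mathcal N(\mu_k,\sigma_k^2)$ is its PRM score. The key structural observation is that each of the $N$ i.i.d.\ samples contributes to at most one of $S_1$ and $S_k$, so the difference is a sum of i.i.d.\ variables, $S_k - S_1 = \sum_{i=1}^N W_i$, where a single sample contributes $-Z_i$ (with $Z_i \sim \mathcal N(\mu_1,\sigma_1^2)$) with probability $p_1$, contributes $+Z_i$ (with $Z_i \sim \mathcal N(\mu_k,\sigma_k^2)$) with probability $p_k$, and contributes $0$ with the remaining probability. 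This per-sample bookkeeping is what lets me avoid conditioning on the (random, multinomial) answer counts and instead obtain a clean product form for the moment generating function.

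For the lower bound I would start from the union bound: PRM\_Vote picks $y_1$ whenever $S_1 > S_k$ for every $k \neq 1$, so $\Pr(\text{picks } y_1) \ge 1 - \sum_{k=2}^K \Pr(S_k - S_1 \ge 0)$. Applying $\Pr(\sum_i W_i \ge 0) \le (\mathbb E[e^{t W_1}])^N$ for any $t>0$ and evaluating the per-sample MGF via the trichotomy above together with the Gaussian identity $\mathbb E[e^{sZ}] = e^{s\mu + \frac12 s^2\sigma^2}$ gives $\mathbb E[e^{tW_1}] = 1 + p_1(e^{-t\mu_1 + \frac12 t^2\sigma_1^2} - 1) + p_k(e^{t\mu_k + \frac12 t^2\sigma_k^2} - 1)$. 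The elementary inequality $1+x \le e^x$ then turns the $N$-th power into exactly the exponential appearing in~\eqref{eq:majority-sum1}; taking the infimum over $t_k > 0$ for each $k$ and summing yields the claim. The upper bound follows from the same machinery after one reduction: being selected forces $S_1 \ge S_k$ for the distinguished index $k$, so $\Pr(\text{picks } y_1) \le \Pr(S_1 - S_k \ge 0)$, and the identical Chernoff/MGF computation with the roles of $1$ and $k$ interchanged produces~\eqref{eq:majority-sum2}.

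The step I expect to need the most care, and the place where the hypotheses $p_1\mu_1 > p_k\mu_k$ and $p_1\mu_1 < p_k\mu_k$ actually do their work, is verifying that the infimum over $t>0$ is nontrivial, i.e.\ strictly below $1$ rather than being attained at $t=0$. Setting $f(t) = p_1(e^{-t\mu_1 + \frac12 t^2\sigma_1^2} - 1) + p_k(e^{t\mu_k + \frac12 t^2\sigma_k^2} - 1)$, one has $f(0)=0$ and $f'(0) = -(p_1\mu_1 - p_k\mu_k)$, so the hypothesis $p_1\mu_1 > p_k\mu_k$ is exactly the condition giving $f'(0) < 0$ and hence a small $t>0$ with $f(t) < 0$; the analogous derivative computation for the upper bound uses $p_1\mu_1 < p_k\mu_k$. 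This is the single point where the mean conditions interact with the Gaussian structure, and it is what guarantees that the stated bounds genuinely decay (respectively, that the complementary probability decays) exponentially in $N$. The remaining items to confirm are routine: the samples' independence makes the $W_i$ i.i.d., the Gaussian MGF is finite for all $t$ so every expectation is well defined, and ties ($S_1 = S_k$) carry no weight for continuous scores and are in any case absorbed by the ``$\ge 0$'' events in the Chernoff step.
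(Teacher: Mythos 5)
Your proposal is correct and follows essentially the same route as the paper: the per-sample decomposition $S_k - S_1 = \sum_{i=1}^N W_i$ with the trichotomy $(-Z_i,\,+Z_i,\,0)$, the Chernoff bound with the Gaussian MGF, the inequality $1+x \le e^x$, the union bound over competitors for the lower bound, and the role swap of indices $1$ and $k$ for the upper bound all match the paper's proof of Theorem~\ref{thm:prm-vote} via its Theorem~\ref{thm:sum_upper}. Your derivative check $f'(0) = -(p_1\mu_1 - p_k\mu_k)$ is exactly the paper's argument for why the hypotheses make the exponent negative for small $t>0$, so the bounds decay exponentially in $N$.
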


All infimum problems in~\eqref{eq:majority-sum} are convex optimization problems. While no analytical expression for the optimal value $t$ is available, we could tractably find $t_k$ for each term using Newton's method. Moreover, we could observe a similar conclusion as $N$ tends to infinity: the bound~\eqref{eq:majority-sum1} approaches $1$ while the bound~\eqref{eq:majority-sum2} approaches 0.

\subsection{Aggregation using Maximum of Scores}

This aggregation method follows the same setup as in Section~\ref{sec:prm_vote}: given $N$ samples, we use the PRM to assign a reward score to each sample. For each unique answer, we take the maximum PRM score among all samples that produce that answer. The final prediction is the answer with the highest such maximum. We suppose that PRM returns a score for answer $y_k$ following a Gaussian distribution $\mathcal N(\mu_k, \sigma_k^2)$ for all $k$. We call this aggregation method PRM\_Max. We have the following result.
\begin{theorem}[Voting with score maximum] 
\label{thm:prm-max}
Let 
\[
\Phi_k(t) := \Phi\!\left(\frac{t - \mu_k}{\sigma_k}\right), \quad k = 1, \ldots, K,
\] 
where $\Phi$ is the cumulative distribution function of the standard normal distribution. 

If $\sigma_1 > \sigma_k$ for all $k = 2, \ldots, K$, then
\begin{subequations} \label{eq:majority-max}
\begin{equation} 
\label{eq:majority-max1}
\Pr(\text{PRM\_Max picks } y_1)
\geq 1 - \sum_{k=2}^K \inf_{t \in \mathbb{R}} 
\Big\{ 
    (1 - p_1 [1 - \Phi_1(t)])^N
    + 1 - (1 - p_k [1 - \Phi_k(t)])^N
\Big\}.
\end{equation}
If $\sigma_k > \sigma_1$ for some $k = 2, \ldots, K$, then
\begin{equation} 
\label{eq:majority-max2}
\Pr(\text{PRM\_Max picks } y_1)
\leq \inf_{t \in \mathbb{R}} 
\Big\{ 
    (1 - p_k [1 - \Phi_k(t)])^N
    + 1 - (1 - p_1 [1 - \Phi_1(t)])^N
\Big\}.
\end{equation}
\end{subequations}
\end{theorem}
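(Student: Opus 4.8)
The plan is to reduce the event ``PRM\_Max picks $y_1$'' to comparisons between independent maxima, and then to decouple each pairwise comparison through a threshold argument. Let $M_k$ denote the largest PRM score among the samples whose answer is $y_k$, with the convention $M_k = -\infty$ when no sample produces $y_k$. Since the PRM scores are drawn from continuous (Gaussian) distributions, ties occur with probability zero, so PRM\_Max selects $y_1$ precisely on the event $\{M_1 > M_k \text{ for all } k \neq 1\}$; note that this event automatically enforces that $y_1$ actually appears among the $N$ samples, since $M_1 = -\infty$ otherwise.

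The key computation is the marginal CDF of each $M_k$. First I would use that the $N$ samples are i.i.d.: a single sample is of type $k$ with score exceeding $t$ with probability $p_k(1 - \Phi_k(t))$, so the probability that \emph{no} sample does so is $(1 - p_k[1 - \Phi_k(t)])^N$. Hence $\Pr(M_k \le t) = (1 - p_k[1 - \Phi_k(t)])^N$, which is exactly the quantity appearing in the statement, and the complement $1 - (1 - p_k[1 - \Phi_k(t)])^N$ equals $\Pr(M_k > t)$. This step silently integrates out the random multinomial counts of each answer type, and I expect it to be the main conceptual obstacle: getting the thinning argument right, including the boundary case where an answer fails to appear, is what produces the clean closed form.

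For the lower bound~\eqref{eq:majority-max1}, I would take the complement and apply a union bound, $\Pr(M_1 > M_k \ \forall k) \ge 1 - \sum_{k=2}^K \Pr(M_1 \le M_k)$. Then for each $k$ I decouple the comparison with an arbitrary threshold $t$ via the inclusion $\{M_1 \le M_k\} \subseteq \{M_1 \le t\} \cup \{M_k > t\}$ (if $M_1 > t$ and $M_1 \le M_k$, then $M_k > t$). This yields $\Pr(M_1 \le M_k) \le \Pr(M_1 \le t) + \Pr(M_k > t)$ for every $t$; substituting the CDF expressions and taking the infimum over $t$ gives the claimed bound.

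For the upper bound~\eqref{eq:majority-max2}, I first discard every constraint except the one against the designated index $k$ with $\sigma_k > \sigma_1$, obtaining $\Pr(\text{PRM\_Max picks } y_1) \le \Pr(M_1 > M_k)$. The symmetric inclusion $\{M_1 > M_k\} \subseteq \{M_k \le t\} \cup \{M_1 > t\}$ then gives $\Pr(M_1 > M_k) \le \Pr(M_k \le t) + \Pr(M_1 > t)$ for all $t$, and taking the infimum yields the result. Finally I would remark that the sign conditions on $\sigma_1$ versus $\sigma_k$ are exactly what make these bounds informative: since the maximum of $n$ Gaussians concentrates near $\mu + \sigma\sqrt{2\ln n}$, the answer whose score distribution has the largest variance eventually dominates, so the lower bound tends to $1$ when $\sigma_1$ is the largest and the upper bound tends to $0$ when some $\sigma_k$ exceeds $\sigma_1$.
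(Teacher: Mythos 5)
Your proposal is correct and takes essentially the same route as the paper: the paper derives the identical closed form $\Pr(M_k \le t) = (1 - p_k[1 - \Phi_k(t)])^N$ (by conditioning on the binomial count of answer $y_k$ and summing the binomial series, which is just the expanded form of your per-sample product), then applies the same threshold inclusion $\{M_a \le M_b\} \subseteq \{M_a \le t\} \cup \{M_b > t\}$ with an infimum over $t$, a union bound over $k = 2, \ldots, K$ for the lower bound, and discarding all comparisons but the one against the designated $k$ for the upper bound. Your explicit treatment of ties and of the $M_k = -\infty$ case when an answer fails to appear is a minor tidiness gain over the paper, which leaves these points implicit.
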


All infimum problems in~\eqref{eq:majority-max} are one-dimensional and can be efficiently solved using standard numerical methods. Moreover, we observe similar asymptotic behavior as $N$ increases:  
if $\sigma_1 > \sigma_k$ for all $k = 2, \ldots, K$, the bound in~\eqref{eq:majority-max1} approaches $1$ as $N \to \infty$,  
while if $\sigma_k > \sigma_1$ for some $k$, the bound in~\eqref{eq:majority-max2} approaches $0$ as $N \to \infty$.

\section{Ensemble Planning with Contrastive Learning} \label{sec:epic}

Given a universe of methods $\mathcal M = \{1, \ldots, M\}$ consisting of $M$ reasoning methods in total, EPIC aims to create an ensemble model on $\mathcal M$ that assigns to any input question $x$ from the test environment an appropriate method $i \in \mathcal M$ that could deliver a desirable accuracy-cost trade-off. We first discuss our modeling of the accuracy-cost trade-off in Section~\ref{sec:prepare}, then we describe the training phase and inference phase in Sections~\ref{sec:representation} and~\ref{sec:matching}. We conclude this section by discussing our design choices.

\subsection{Accuracy-Cost and Utility} \label{sec:prepare}

We possess a training dataset of $n$ question-answer pairs denoted as $\mathcal D = \{x_j, y_j\}_{j=1}^n$, where $x_j$ is a question statement, and $y_j$ is the corresponding true answer. 
In the training phase, we deploy a reasoning model $\Phi_i$, $i \in \mathcal M$, to each question $x_j$. The generated solution is $\Phi_i(x_j)$. We record whether $\Phi_i(x_j)$ is accurate by comparing it to the ground-truth answer $y_j$, and obtain the accuracy signal
    \begin{equation} \label{eq:accuracy}
        a_{i,j} = \mathrm{Accuracy}(\Phi_i(x_j), y_j) \in [0, 1].
    \end{equation}
    If $\Phi_i$ is a deterministic method, then~\eqref{eq:accuracy} is a simple binary indicator $\mathrm{Accuracy}(\Phi_i(x_j), y_j) = \mathbbm{1}(\Phi_i(x_j) = y_j)$. When $\Phi_i$ is a stochastic method, then we average the accuracy over five seed numbers to get a percentage accuracy. 
    The value $a_{i,j}$ indicates whether method $i$ succeeds in answering question $j$. Moreover, we also record how many tokens the method $i \in \mathcal M$ costs to generate the answer. This token count is denoted by $\tilde c_{i,j} > 0$. Because $a_{i,j} \in [0, 1]$, we normalize the token count by passing $\tilde c_{i,j}$ through a non-decreasing function $\phi: \mathbb{R}_+ \to \mathbb{R}_+$, then dividing by the maximum transformed cost to ensure that the cost $c_{i, j}\in [0, 1]$ has the same scale with $a_{i,j}$:
    \[
    c_{i, j} = \frac{\phi(\tilde c_{i, j})}{\max_{i' \in \mathcal M}(\phi(\tilde c_{i', j}))}. 
    \]

To balance cost and success rate, we establish the utility function that is the convex combination of accuracy $a_{i,j}$ and normalized cost $c_{i,j}$ as
\begin{equation} \label{eq:utility}
    u(a_{i,j}, c_{i,j}) = \lambda a_{i,j} + (1 - \lambda) (1- c_{i,j}),
\end{equation}
where $\lambda \in [0, 1]$ is a trade-off parameter, and the utility admits a value between 0 and 1. If $\lambda = 0$, then $u(a_{i,j}, c_{i,j}) = 1 - c_{i,j}$, which implies that the utility depends only on the generation cost. In this way, we tend to favor the cheapest reasoning method, regardless of how effective it is at generating accurate answers. On the other end of the spectrum, when $\lambda = 1$, $u(a_{i,j}, c_{i,j}) = a_{i,j}$, implying that the utility is entirely derived from the accuracy. In this way, we tend to favor the most powerful reasoning method, regardless of its cost. To simplify the notation, we omit the parameter $\lambda$, and use the shorthand $u_{i,j} = u(a_{i,j}, c_{i,j})$.

The product of the data preparation process is a processed dataset $\{x_j, (u_{i,j})_{i \in \mathcal M}\}_{j=1}^n$ containing the training question and the corresponding utility of each reasoning method for that question. This dataset will be used in the subsequent contrastive learning process.

\subsection{Contrastive Representation Learning with Probability Regularization} \label{sec:representation}

We now describe the core component of our framework that matches the input question with the appropriate reasoning method. We represent each question $x_j$ in the training dataset by its features $f_j \in \mathbb{R}^D$. A lightweight neural network $g_\theta: \mathbb{R}^D \rightarrow \mathbb{R}^d$ maps each question feature vector  $f_j$ to produce a dense embedding $g_\theta(f_j)$ in a $d$-dimensional vector space. EPIC aims for an information compression with $d \ll D$. Moreover, each reasoning method $i \in \mathcal M$ is assigned a trainable embedding vector $v_i \in \mathbb{R}^d$, which is the same dimension as the question embeddings. EPIC uses a simple multi-layer perceptron for $\theta$.

\begin{figure*}
    \centering
    \includegraphics[width=0.85\linewidth]{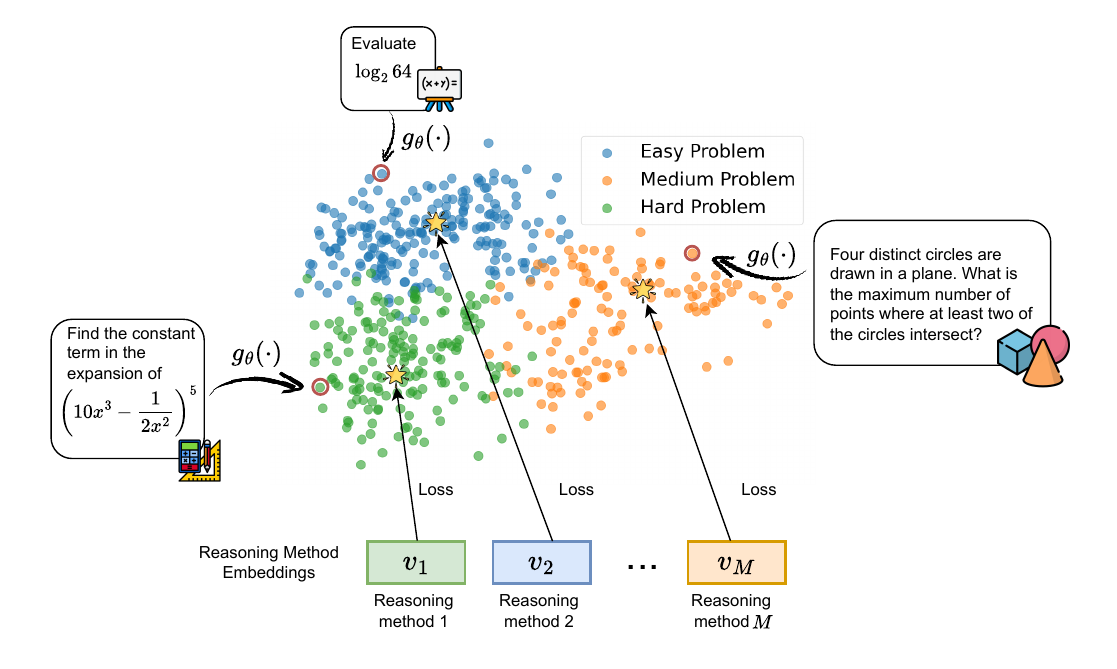}
    \vspace{-5mm}
    \caption{Our method employs the regularized representation learning loss~\eqref{eq:training} to learn both the reasoning method representation vectors, denoted as $v_1, \ldots, v_M$, and the question embedding network parameters $\theta$. During inference, we route suitable math questions to the appropriate reasoning method by computing the similarity between the input questions and the learned method representations. Color codes on problem difficulty levels are provided for illustration purposes only.}
    \label{fig:representation}
    \vspace{-5mm}
\end{figure*}

We now train the question embedding network parameter $\theta$ and the method embedding vectors $v_i$ \textit{jointly}. One component in the training loss is the popular contrastive loss function, InfoNCE loss \citep{ref:oord2018representation}. We identify a positive method for each question $x_j$, denoted as $m_+(x_j)$. Given the utility values defined in~\eqref{eq:utility}, we can identify the method with the highest utility for question $x_j$:
\[
    m_+(x_j) = \arg \max_{i \in \mathcal M}~ u(a_{i,j}, c_{i,j}),
\]
This leads to the contrastive loss component:
\begin{equation} \label{eq:ell_contrast}
    \ell_{\mathrm{contrastive}}(\theta, v_1, \ldots, v_M) =  \frac1n \sum_{j=1}^n - \log \left(\frac{ \exp(s(g_\theta(f_j), v_{m_+(x_j)}) }
       { \sum_{i \in \mathcal M} \exp(s(g_\theta(f_j), v_i))} \right).
\end{equation}
Above, $s$ is a similarity score function that measures the similarity of a question embedding $g_\theta(f_j)$ with the method embedding $v_i$. Standard choices for $s$ are the dot product similarity measure or the negative 2-norm. The contrastive component~\eqref{eq:ell_contrast} aims to pull $g_\theta(f_j)$ close to the positive method $v_{m_+(x_j)}$, and push $g_\theta(f_j)$ far away from the negative methods $i \neq m_+(x_j)$. The loss in~\eqref{eq:ell_contrast} is the categorical cross-entropy loss of classifying the positive method, with the fraction inside the logarithm being the model's prediction. 

The second component of the loss function is a regularization term: Two methods that share the same tuple (base model, reasoning strategy, aggregation technique and configuration) but differ \textit{only} by the compute budget $N$ should conform to a relative performance metric because they both inherit the same stochastic generator. We postulate the following regularization term:
\begin{equation}
\label{eq:reg_loss}
\ell_{\mathrm{reg}}(\theta, v_1, \ldots, v_M) =  \frac1n \sum_{j=1}^n \sum_{\substack{ (i, i') \in \mathcal M \\ (i, i') \text{ differ only by $N$}}} 
\left( \frac{s(g_\theta(f_j), v_i)}{s(g_\theta(f_{j}), v_{i'})} - \frac{\text{target}^j_i}{\text{target}^j_{i'}} \right)^2.
\end{equation}
This regularizer promotes the fraction of the similarities to be close to the fraction of the target quantities.
Ideally, we should use $\text{target}^i_j = \Pr(\text{method $i$ picks the correct answer for question $x_j$})$, which is the intrinsic characteristic of the stochastic generator. However, this probability value is not readily available, therefore we leverage the bounds in Section~\ref{sec:analysis} as target values, and empirically compute these target values as follows: For each question $j$ and core configuration (generation method, temperature, aggregation method, etc.), we generate 80 solutions (5 independent runs of $N=16$ with different seed numbers) to obtain a set of distinct solutions ${y_1, \dots, y_K}$. We then estimate the parameters $\hat{p}_k, \hat{\mu}_k, \hat{\sigma}_k$ from these 80 solutions. We can then empirically identify whether the lower or upper bound of the probability is active and assign the target value as either the lower or upper bound with the corresponding size $N$. The bound provided in Theorem~\ref{thm:prm-max} is valid when $N$ is large enough. For smaller $N$, we use the empirical accuracy as an alternative.

Combining two loss terms~\eqref{eq:ell_contrast} and ~\eqref{eq:reg_loss}, we obtain the training problem
   \begin{equation} \label{eq:training}
    \min_{\theta}~\min_{v_1, \ldots, v_M \in \mathbb{R}^d}~ \ell_{\mathrm{contrastive}}(\theta, v_1, \ldots, v_M) + \tau \ell_{\mathrm{reg}}(\theta, v_1, \ldots, v_M),
   \end{equation}
where $\tau > 0$ is a hyperparameter aiming to promote the sample efficiency of the training procedure.

\subsection{Inference Time Matching} \label{sec:matching}

At inference time, we pass any new question $x_{\mathrm{new}}$, or equivalently its feature vector $f_{\mathrm{new}}$, through the trained network $g_\theta$ to obtain the question embedding $g_\theta(f_{\mathrm{new}})$. We then find the top-1 reasoning method by
$m^\star = \arg \max_{i \in \mathcal M}~s(g_\theta(f_{\mathrm{new}}), v_i)$, that maximizes the similarity score between the question and the trained representation vector $v_1, \ldots, v_M$ of the reasoning models. We then deploy method $m^\star$ to answer this question.

\subsection{Discussions}

We now discuss the necessity and importance of the design choices of our EPIC method.

\textbf{Discussion 1 (Questions' feature vector).} There are multiple ways to obtain the feature vector $f_j$ for each question $x_j$. For example, we can take $f_j$ as the activation of the last token of $x_j$ extracted from one of the layers (potentially the last layer) of the language model. This approach does not incur any additional memory requirement because we do not need to load any auxiliary models onto the device. However, the activation dimension of the language models is usually high: for example, in Qwen2.5-Math-7B-Instruct~\citep{ref:yang2024qwen2}, $D = 3584$. This high dimensionality could prohibit efficient training of the representation parameters $\theta$. Alternatively, we can use a lightweight model to map $x_j$ to $f_j$. This could incur additional memory overhead but generate a lower $D$ as input to the network $g_\theta$. In the experiment, we will use a lightweight sentence embedding \texttt{all-MiniLM-L6-v2}\footnote{\url{https://huggingface.co/sentence-transformers/all-MiniLM-L6-v2}} that has only 22.6 million parameters and incurs only 80MB of additional VRAM. The corresponding feature dimension is $D = 384$.

\textbf{Discussion 2 (Importance of embedding network $g_\theta$).} Given the question features $f_1, \ldots, f_n \in \mathbb{R}^D$, one could simplify the representation learning problem~\eqref{eq:training} by optimizing the method embedding vectors $v_1, \ldots, v_M$ directly on the space of $\mathbb{R}^D$. This is equivalent to setting $d = D$, and letting $g_\theta$ collapse into an identity mapping.
However, the proximity between two question features $f_j$ and $f_{j'}$ does not convey enough information about the similarity regarding hardness, resource utilization, and suitability with methods. Moreover, learning the method embedding $v_j$ on $\mathbb{R}^D$ is more difficult than on the smaller dimension space $\mathbb{R}^d$. Hence, learning in $\mathbb{R}^D$ is inefficient. This observation necessitates the use of a lightweight question map $g_\theta$. 

\textbf{Discussion 3 (Adaptive method insertion).}
Given a universe of models $\mathcal M$, problem~\eqref{eq:training} optimizes one vector $v_i$ for a reasoning model $i \in \mathcal M$. Alternatively, we could use another network $h_{\vartheta}$ that could take a (text) description of a reasoning method and output the respective embedding vector in the representation space $\mathbb{R}^d$. Having the second network $h_\vartheta$ could unlock several new capabilities: (i) for a new reasoning method that is not in $\mathcal M$, we could quickly obtain its embedding and predict its performance on the questions, (ii) we could inverse engineer to design a better reasoning method. Unfortunately, training $h_\vartheta$ requires a meaningful textual description of the reasoning methods. This is currently outside the scope of this paper, and we leave it for subsequent work.

\section{Numerical Experiments} \label{sec:exp}

In this section, we present numerical experiments showcasing the performance of EPIC on the math answering task. Experiments for the code generation task are relegated to Appendix~\ref{app:ablation:code}.

\textbf{Dataset.} We use the MATH dataset~\citep{ref:hendrycks2021measuring} as a training set, utilizing its training split of 7,500 math problems with solutions, as defined in~\citet{ref:hendrycks2021measuring}. For the code generation experiment, we use the LiveCodeBench dataset~\citep{ref:jain2025livecodebench}. More details are in Appendix~\ref{app:ablation:code}. For evaluation, we test on the MATH500 test split, which contains 500 samples, as defined in~\citet{ref:lightman2023let}. We also use the test set of the GSM8K~\citep{ref:cobbe2021gsm8k} dataset to evaluate the transferability of the method embedding vectors learned in Section~\ref{sec:representation}.

\textbf{Base models.}
    We employ Qwen2.5-Math-7B-Instruct\footnote{\url{https://huggingface.co/Qwen/Qwen2.5-Math-7B-Instruct}} as our generation model, and math-shepherd-mistral-7b-prm~\citep{ref:wang24mathsheperd}\footnote{\url{https://huggingface.co/peiyi9979/math-shepherd-mistral-7b-rl}} as our reward model in the PRM framework. These models are fixed throughout our main experiments. For transferability experiments, we augment our universe of methods to include Qwen2.5-Math-1.5B. For code generation experiments, we use Qwen2.5-Coder-3B-Instruct and Qwen2.5-Coder-7B-Instruct. 

\textbf{Performance metrics.} We use the accuracy to measure the quality of generation and average token counts to evaluate the efficiency of each method. For accuracy, we use the automatic grading\footnote{\url{https://github.com/openai/prm800k/blob/main/prm800k/grading/grader.py}} provided by previous work~\cite{ref:lightman2023let} to evaluate the accuracy of a generated solution in~\eqref{eq:accuracy}. To measure average token counts, we set the hyperparameter `max\_new\_token' to 2048 for all methods and compute the average number of tokens generated.

\textbf{Universe of methods $\mathcal M$.} 
We generate $\mathcal M$ consisting of 81 distinct methods, spanning a variety of reasoning strategies, aggregation techniques, and parameter configurations. A complete description is provided in Appendix~\ref{sec:universe}.

\textbf{Dataset generation for contrastive learning.} For a deterministic method $i$ in $\mathcal{M}$, we run inference on each question $x_j$ once and record the accuracy and number of generated tokens. For the sampling method, we run the inference on each question five times to obtain a mean estimate of $a_{i,j}$ and $c_{i,j}$.

\textbf{Baselines.} We compare EPIC against three categories of baselines: 
(i) individual reasoning methods from the universe $\mathcal{M}$, 
(ii) strong large-model references including \textbf{DeepSeek-V3} and \textbf{OpenAI-o1-mini}, and 
(iii) alternative reasoning selection methods—\textbf{RA}, \textbf{Offline Ada-BoK}~\citep{ref:damani2025learning}, \textbf{DRA-$\lambda$}, and \textbf{CL-$\lambda$}. 
All ensemble baselines and EPIC are trained and evaluated on the same $\mathcal{M}$ for fair comparison. 
Further details are provided in Appendix~\ref{app:baseline_experiment}.

\textbf{Reproducibility.}  All experiments are conducted on a single machine with $8\times$ NVIDIA RTX A5000 GPU and Intel(R) Xeon(R) Gold 6148 CPU @ 2.40GHz.

\subsection{Comparison between EPIC and Baselines}
\begin{figure*}
    \centering
    \includegraphics[width=0.7\linewidth]{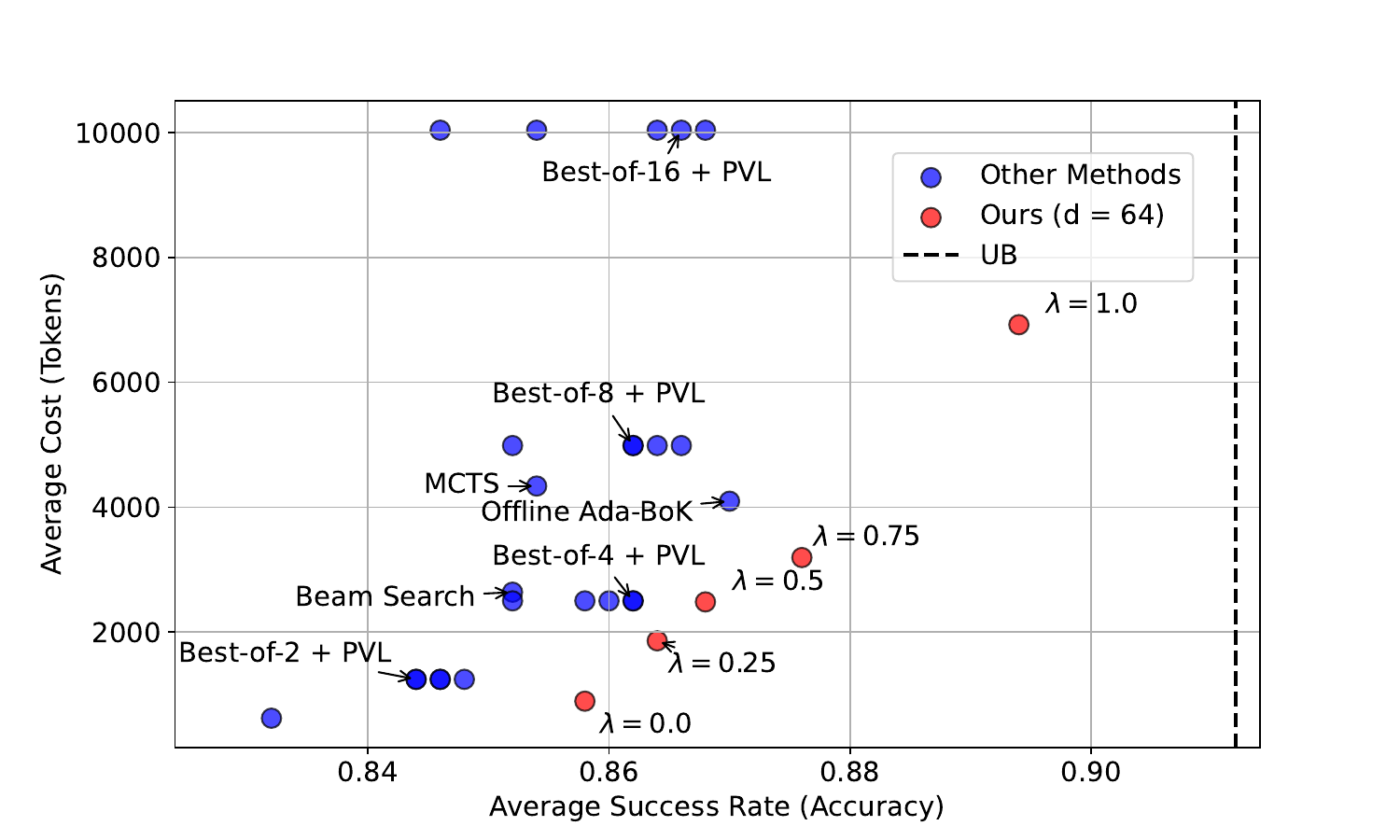}
    \caption{Average success rate and token counts on the test set with embedding dimension $d = 64$. Our ensemble planner performances with varying $\lambda \in \{0, 0.25, 0.5, 0.75, 1\}$ are highlighted in red, and individual reasoning models in $\mathcal M$ are plotted in blue. The boundary of the ensemble planners covers the individual models in the universe $\mathcal M$. The Upper Bound (UB) under $\mathcal M$ is the proportion of questions that at least one method in $\mathcal M$ could successfully solve.}
    \label{fig:pareto64}
\end{figure*}

In the first experiment, we benchmark how EPIC, an ensemble model, outperforms individual reasoning models in the universe $\mathcal M$. Table~\ref{tab:main_result} presents the test performance comparison between our ensemble planner and individual reasoning methods, computed based on average accuracy and the number of generated tokens. We apply the regularization parameter $\tau = 10^{-3}$ based on its better numerical results than other values shown in Appendix~\ref{app:ablation}. Our method with $\lambda = 0.25$ achieves an accuracy of 86.4\%, matching the best-of-16 approach while using significantly fewer tokens: EPIC generates 1859.2 tokens while best-of-16 generates~10036.2 tokens. On a relative scale, this is a 5x reduction in the token counts at the same accuracy level. Compared to beam search, our approach at $\lambda = 0.25$ achieves better accuracy with a 29.5\% reduction in token usage. With $\lambda = 1.00$, our method achieves the highest accuracy (89.4\%) at a significantly lower token count (6,921.7).

To better visualize the performance of EPIC, we plot in Figure~\ref{fig:pareto64} a scatter plot locating the accuracy-cost trade-off of EPIC instances and representative reasoning models from $\mathcal M$. Our EPIC instances (red) all lie on the frontier, thereby boosting the performance of the inference phase. 

Since EPIC is an ensemble method constructed from the universe of reasoning methods $\mathcal M$, the performance of EPIC is constrained by the capacity of the universe $\mathcal M$ itself. We could compute the best possible accuracy of the whole universe $\mathcal M$ on the test set: $\mathcal M$ could solve a question if there is at least one method from $\mathcal M$ that could generate a correct answer. Computing this value yields an upper bound of approximately 91.2\%. Figure~\ref{fig:pareto64} highlights that our method provides a flexible trade-off between efficiency and accuracy, approaching the upper bound (dashed vertical line) while maintaining computational efficiency. 

\begin{table}[H]
    \centering
    \caption{Average accuracy and number of generated tokens on MATH500 using different reasoning methods and language models (full results in Appendix~\ref{tab:appendix_full_main}). Methods above the blue line are either upper bounds under $\mathcal{M}$ or are not included in $\mathcal{M}$. Methods above the green line correspond to individual single-reasoning configurations (without selection modules). Methods above the brown line do not support accuracy–cost trade-offs. We compare EPIC-$\lambda$, DRA-$\lambda$, and CL-$\lambda$ at various trade-off settings (groups separated by gray lines). Best results in each section are highlighted.}
    \label{tab:main_result}
    \small
    \begin{tabu} to 0.7\textwidth {lcc}
        \toprule
        Method & Accuracy $\uparrow$ & Average Token Count $\downarrow$ \\ \toprule
        OpenAI-o1-mini$^*$~\citep{ref:jaech2024openai}     & 90.0  & -   \\
        Deepseek-V3$^*$~\citep{ref:liu2024deepseek}     & 90.2  & -   \\
        Upper Bound under $\mathcal M$       & 91.2 & - \\
        \tabucline[1pt blue!40 off 2pt]{-}
        CoT-G        & 83.2  & 620.4   \\
        Best-of-2    & 84.8  & 1242.5  \\
        Best-of-4    & 86.2  & 2499.4  \\
        Best-of-8    & 86.6  & 4986.8  \\
        Best-of-16   & 86.8  & 10036.2 \\
        MCTS         & 85.4  & 4338.1  \\
        Beam-search  & 85.2  & 2638.1  \\
        \tabucline[1pt green!80 off 2pt]{-} 
        RA               & 84.4 & 1752.4  \\
        Offline Ada-BoK  & 87.0 & 4095.2  \\
        \tabucline[1pt brown!80 off 2pt]{-} 
        DRA-0.25       & 86.2 & 2453.6  \\
        CL-0.25        & 86.0 & 2275.6  \\
        \textbf{EPIC-0.25}      & \textbf{86.4} & \textbf{1859.2}  \\
        \tabucline[1pt gray!80 off 2pt]{-}
        DRA-0.5        & 86.4 & 5719.3  \\
        CL-0.5         & 86.6 & 5320.2  \\
         \textbf{EPIC-0.5}       & \textbf{86.8} & \textbf{2482.6}  \\
        \tabucline[1pt gray!80 off 2pt]{-}
        DRA-0.75       & 86.4 & 7523.2  \\
        CL-0.75        & 87.0 & 7524.6  \\
        \textbf{EPIC-0.75}      & \textbf{87.6} & \textbf{3192.9}  \\
        \tabucline[1pt gray!80 off 2pt]{-}
        DRA-1.0        & 87.0 & 10542.2 \\
        CL-1.0         & 87.8 & 10923.4 \\
        \textbf{EPIC-1.0}   & \textbf{89.4} & \textbf{6921.7} \\
        \bottomrule
    \end{tabu}
    
    \vspace{2mm}
    \noindent
    {\scriptsize $^*$Method not in $\mathcal{M}$. We obtained results from~\cite{ref:liu2024deepseek}}.
\end{table}

\subsection{Transferability} \label{app:ablation:transferability}
We now examine the transferability of EPIC across both model scales and datasets. Table~\ref{tab:epic_combined} summarizes results for two complementary settings: (a) transferring from the MATH to the GSM8K dataset, and (b) applying EPIC in a cost-aware multi-model environment with Qwen2.5-Math-1.5B and Qwen2.5-Math-7B.

\begin{table*}[h]
\centering
\caption{EPIC performance comparison across datasets and model sizes. 
(a) GSM8K results where EPIC is trained on the MATH dataset using $d = 64$ and $\lambda = 0.25$. 
(b) Performance and cost comparison for Qwen2.5-Math-1.5B and 7B models. 
Best results in each column are highlighted.}
\label{tab:epic_combined}
\vspace{2mm}
\begin{minipage}{0.47\textwidth}
\centering
\subcaption{GSM8K results}
\label{tab:gsm8k_results}
\begin{small}
\begin{tabular}{p{2cm}cc}
\toprule
Method & Accuracy $\uparrow$ & Tokens $\downarrow$ \\
\midrule
CoT-G       & 93.5 & 297.0   \\
Best-of-2   & 94.0 & 594.6   \\
Best-of-4   & 94.0 & 1195.6  \\
Best-of-8   & 94.3 & 2412.3  \\
Best-of-16  & 94.2 & 5019.2  \\
\textbf{EPIC} & \textbf{95.0} & \textbf{2085.5} \\
\bottomrule
\end{tabular}
\end{small}
\end{minipage}
\hfill
\begin{minipage}{0.47\textwidth}
\centering
\subcaption{Performance and cost with Qwen2.5 models}
\label{tab:ablation}
\begin{small}
\begin{tabular}{lcc}
\toprule
Method & Accuracy $\uparrow$ & Cost $\downarrow$ \\ 
\midrule
1.5B-CoT-G       & 76.0  & \textbf{856.5} \\
1.5Best-of-4     & 78.6  & 3454.5 \\
7B-CoT-G         & 83.2  & 4342.8 \\
7B-Best-of-4     & 86.2  & 17495.8 \\
1.5Best-of-16    & 79.4  & 62663.2 \\
7B-Best-of-16    & 86.8  & 70253.4 \\
EPIC ($\lambda=0.25$) & 86.2  & 8047.8 \\
EPIC ($\lambda=1.0$)  & \textbf{89.0} & 35705.4 \\
\bottomrule
\end{tabular}
\end{small}
\end{minipage}
\end{table*}

\subsubsection{Evaluating EPIC with Cost-Aware Multi-Model Reasoning}
\label{app:multi_model}
EPIC remains effective even when reasoning methods use heterogeneous base models. Previously, our universe $\mathcal{M}$ contained 81 methods built solely on Qwen2.5-Math-7B-Instruct. We now augment this space with an additional 81 methods using Qwen2.5-Math-1.5B-Instruct. Because larger models are computationally more expensive, we approximate the cost of each method as the product of its parameter count (in billions) and the number of generated tokens. This proxy aligns with real-world inference costs; alternatively, FLOPs or API pricing could be used.

Table~\ref{tab:ablation} shows that EPIC adapts effectively across cost regimes. At $\lambda = 0.25$, EPIC achieves an accuracy of 86.2, matching the 7B-Best-of-4 method, while reducing the cost by over 50\% (8047.8 vs.~17495.8). At $\lambda = 1.0$, EPIC attains the highest overall accuracy (89.0), outperforming 7B-Best-of-16 while maintaining roughly half the computational cost (35705.4 vs.~70253.4). These results demonstrate EPIC’s capacity to balance accuracy and cost by dynamically leveraging reasoning methods from different model sizes.

\subsubsection{Transfer to Another In-Domain Dataset}

To further assess generalization, we evaluate EPIC trained on MATH and test it on GSM8K~\citep{ref:cobbe2021gsm8k}, another widely used arithmetic reasoning benchmark.  As shown in Table~\ref{tab:gsm8k_results}, GSM8K is a simpler dataset; hence, absolute gains are smaller. Nonetheless, EPIC achieves the best accuracy (95.0\%) while requiring fewer tokens than high-cost baselines such as Best-of-8 or Best-of-16. This indicates that EPIC’s learned representations transfer across related reasoning distributions and continue to yield efficient inference-time behavior.

\subsection{Additional Experiments} 
We perform ablation studies to better understand the impact of the representation dimension, the regularization parameter $\tau$, and the tradeoff parameter $\lambda$, the utility function, the transferability, and the generalization on the code generation task. In the experiment of representation dimension, we fix $\lambda = 0.5$ and vary dimension $d \in \{16, 32, 64, 128\}$. Our results show a general trend: as $d$ increases, accuracy improves, while average token count stabilizes or slightly decreases. This result empirically confirms the expectation that increasing the embedding dimension could boost the performance of our method. In the experiment on the impact of $\lambda$, we observe a clear cost-accuracy trade-off as we fix $d=64$ and varies $\lambda \in \{0.00, 0.25, 0.50, 0.75, 1.00\}$. In the ablation study of the utility function, we switch to an alternative functional form, as shown in equation~\eqref{eq:utility}, and observe a decrease in performance, indicating that our design choice is superior. Due to space constraints, further experimental details are provided in Appendix~\ref{app:ablation}. 

\section{Conclusion}

We introduced EPIC, the \textbf{E}nsemble \textbf{P}lann\textbf{I}ng with \textbf{C}ontrastive learning framework, a contrastive learning framework that plans optimal reasoning strategies for language models by matching questions to suitable methods. Our analysis established new accuracy bounds for common aggregation techniques, which directly inform a regularization term to guide more sample-efficient learning. Experiments on mathematical reasoning benchmarks demonstrate that EPIC leverages these theoretical insights to achieve strong improvements in both accuracy and inference cost, showcasing the value of principled modeling for reasoning method selection.

\bibliography{bibliography}

@article{ref:lin2024awq,
  title={{AWQ}: Activation-aware Weight Quantization for On-Device {LLM} Compression and Acceleration},
  author={Lin, Ji and Tang, Jiaming and Tang, Haotian and Yang, Shang and Chen, Wei-Ming and Wang, Wei-Chen and Xiao, Guangxuan and Dang, Xingyu and Gan, Chuang and Han, Song},
  journal={Proceedings of Machine Learning and Systems},
  volume={6},
  pages={87--100},
  year={2024}
}

@inproceedings{ref:yao2023tree,
  title={Tree of thoughts: Deliberate problem solving with {L}arge {L}anguage {M}odels},
  author={Yao, Shunyu and Yu, Dian and Zhao, Jeffrey and Shafran, Izhak and Griffiths, Tom and Cao, Yuan and Narasimhan, Karthik},
  booktitle={Advances in Neural Information Processing Systems},
  volume={36},
  pages={11809--11822},
  year={2023}
}

@inproceedings{ref:brown2020language,
  title={Language models are few-shot learners},
  author={Brown, Tom and Mann, Benjamin and Ryder, Nick and Subbiah, Melanie and Kaplan, Jared D and Dhariwal, Prafulla and Neelakantan, Arvind and Shyam, Pranav and Sastry, Girish and Askell, Amanda and others},
  booktitle={Advances in Neural Information Processing Systems},
  volume={33},
  pages={1877--1901},
  year={2020}
}

@inproceedings{ref:wei2022chain,
  title={Chain-of-thought prompting elicits reasoning in {L}arge {L}anguage {M}odels},
  author={Wei, Jason and Wang, Xuezhi and Schuurmans, Dale and Bosma, Maarten and Xia, Fei and Chi, Ed and Le, Quoc V and Zhou, Denny and others},
  booktitle={Advances in Neural Information Processing Systems},
  volume={35},
  pages={24824--24837},
  year={2022}
}

@inproceedings{ref:park2023generative,
  title={Generative agents: Interactive simulacra of human behavior},
  author={Park, Joon Sung and O'Brien, Joseph and Cai, Carrie Jun and Morris, Meredith Ringel and Liang, Percy and Bernstein, Michael S},
  booktitle={Proceedings of the 36th annual acm symposium on user interface software and technology},
  pages={1--22},
  year={2023}
}

@inproceedings{ref:nguyen2025structured,
    title = "Structured Pruning for Diverse Best-of-$N$ Reasoning Optimization",
    author = "Nguyen, Hieu Trung  and
      Nguyen, Bao  and
      Nguyen, Viet Anh",
    booktitle = "Findings of the Association for Computational Linguistics: ACL 2025",
    month = jul,
    year = "2025",
    address = "Vienna, Austria",
    publisher = "Association for Computational Linguistics",
    doi = "10.18653/v1/2025.findings-acl.1225",
    pages = "23911--23922",
    ISBN = "979-8-89176-256-5",
}

@inproceedings{ref:nguyen2025task,
    title = "Task-driven Layerwise Additive Activation Intervention",
    author = "Nguyen, Hieu Trung  and
      Nguyen, Bao  and
      Nguyen, Binh  and
      Nguyen, Viet Anh",
    booktitle = "Proceedings of the 2025 Conference of the Nations of the Americas Chapter of the Association for Computational Linguistics: Human Language Technologies (Volume 2: Short Papers)",
    month = apr,
    year = "2025",
    address = "Albuquerque, New Mexico",
    publisher = "Association for Computational Linguistics",
    doi = "10.18653/v1/2025.naacl-short.43",
    pages = "506--513",
    ISBN = "979-8-89176-190-2",
}

@inproceedings{ref:xie2024travelplanner,
  title={{T}ravel{P}lanner: A Benchmark for Real-World Planning with Language Agents},
  author={Xie, Jian and Zhang, Kai and Chen, Jiangjie and Zhu, Tinghui and Lou, Renze and Tian, Yuandong and Xiao, Yanghua and Su, Yu},
  booktitle={Forty-first International Conference on Machine Learning},
  year={2024}
}

@inproceedings{ref:li2023inference,
  title={Inference-time intervention: Eliciting truthful answers from a language model},
  author={Li, Kenneth and Patel, Oam and Vi{\'e}gas, Fernanda and Pfister, Hanspeter and Wattenberg, Martin},
  booktitle={Advances in Neural Information Processing Systems},
  volume={36},
  pages={41451--41530},
  year={2023}
}

@inproceedings{ref:jin2024teach,
  title={Teach ai how to code: Using large language models as teachable agents for programming education},
  author={Jin, Hyoungwook and Lee, Seonghee and Shin, Hyungyu and Kim, Juho},
  booktitle={Proceedings of the 2024 CHI Conference on Human Factors in Computing Systems},
  pages={1--28},
  year={2024}
}

@inproceedings{ref:bui2025mixture,
    title = "Mixture-of-Personas Language Models for Population Simulation",
    author = "Bui, Ngoc  and
      Nguyen, Hieu Trung  and
      Kumar, Shantanu  and
      Theodore, Julian  and
      Qiu, Weikang  and
      Nguyen, Viet Anh  and
      Ying, Rex",
    booktitle = "Findings of the Association for Computational Linguistics: ACL 2025",
    month = jul,
    year = "2025",
    address = "Vienna, Austria",
    publisher = "Association for Computational Linguistics",
    doi = "10.18653/v1/2025.findings-acl.1271",
    pages = "24761--24778",
    ISBN = "979-8-89176-256-5",
}

@article{ref:liu2024deepseek,
  title={Deepseek-v3 technical report},
  author={Liu, Aixin and Feng, Bei and Xue, Bing and Wang, Bingxuan and Wu, Bochao and Lu, Chengda and Zhao, Chenggang and Deng, Chengqi and Zhang, Chenyu and Ruan, Chong and others},
  journal={arXiv preprint arXiv:2412.19437},
  year={2024}
}

@article{ref:jaech2024openai,
  title={Openai o1 system card},
  author={Jaech, Aaron and Kalai, Adam and Lerer, Adam and Richardson, Adam and El-Kishky, Ahmed and Low, Aiden and Helyar, Alec and Madry, Aleksander and Beutel, Alex and Carney, Alex and others},
  journal={arXiv preprint arXiv:2412.16720},
  year={2024}
}

@article{ref:li2022making,
  title={Making large language models better reasoners with step-aware verifier},
  author={Li, Yifei and Lin, Zeqi and Zhang, Shizhuo and Fu, Qiang and Chen, Bei and Lou, Jian-Guang and Chen, Weizhu},
  journal={arXiv preprint arXiv:2206.02336},
  year={2022}
}

@article{ref:li2025s,
  title={S*: Test time scaling for code generation},
  author={Li, Dacheng and Cao, Shiyi and Cao, Chengkun and Li, Xiuyu and Tan, Shangyin and Keutzer, Kurt and Xing, Jiarong and Gonzalez, Joseph E and Stoica, Ion},
  journal={arXiv preprint arXiv:2502.14382},
  year={2025}
}

@inproceedings{ref:jain2025livecodebench,
title={LiveCodeBench: Holistic and Contamination Free Evaluation of Large Language Models for Code},
author={Naman Jain and King Han and Alex Gu and Wen-Ding Li and Fanjia Yan and Tianjun Zhang and Sida Wang and Armando Solar-Lezama and Koushik Sen and Ion Stoica},
booktitle={The Thirteenth International Conference on Learning Representations},
year={2025},
}

@inproceedings{ref:damani2025learning,
title={Learning How Hard to Think: Input-Adaptive Allocation of {LM} Computation},
author={Mehul Damani and Idan Shenfeld and Andi Peng and Andreea Bobu and Jacob Andreas},
booktitle={The Thirteenth International Conference on Learning Representations},
year={2025},
}

@inproceedings{lightman2023let,
  title={Let's verify step by step},
  author={Lightman, Hunter and Kosaraju, Vineet and Burda, Yuri and Edwards, Harrison and Baker, Bowen and Lee, Teddy and Leike, Jan and Schulman, John and Sutskever, Ilya and Cobbe, Karl},
  booktitle={The Twelfth International Conference on Learning Representations},
  year={2023}
}

@article{yu2023ovm,
  title={Ovm, outcome-supervised value models for planning in mathematical reasoning},
  author={Yu, Fei and Gao, Anningzhe and Wang, Benyou},
  journal={arXiv preprint arXiv:2311.09724},
  year={2023}
}

@article{li2025s,
  title={S*: Test time scaling for code generation},
  author={Li, Dacheng and Cao, Shiyi and Cao, Chengkun and Li, Xiuyu and Tan, Shangyin and Keutzer, Kurt and Xing, Jiarong and Gonzalez, Joseph E and Stoica, Ion},
  journal={arXiv preprint arXiv:2502.14382},
  year={2025}
}

@article{jiang2024survey,
  title={A survey on large language models for code generation},
  author={Jiang, Juyong and Wang, Fan and Shen, Jiasi and Kim, Sungju and Kim, Sunghun},
  journal={arXiv preprint arXiv:2406.00515},
  year={2024}
}

@article{luo2024improve,
  title={Improve mathematical reasoning in language models by automated process supervision},
  author={Luo, Liangchen and Liu, Yinxiao and Liu, Rosanne and Phatale, Samrat and Guo, Meiqi and Lara, Harsh and Li, Yunxuan and Shu, Lei and Zhu, Yun and Meng, Lei and others},
  journal={arXiv preprint arXiv:2406.06592},
  year={2024}
}

@article{ref:cobbe2021gsm8k,
  title={Training Verifiers to Solve Math Word Problems},
  author={Cobbe, Karl and Kosaraju, Vineet and Bavarian, Mohammad and Chen, Mark and Jun, Heewoo and Kaiser, Lukasz and Plappert, Matthias and Tworek, Jerry and Hilton, Jacob and Nakano, Reiichiro and Hesse, Christopher and Schulman, John},
  journal={arXiv preprint arXiv:2110.14168},
  year={2021} 
}

@inproceedings{ref:dao2022flashattention,
  title={Flashattention: {F}ast and memory-efficient exact attention with {IO}-Awareness},
  author={Dao, Tri and Fu, Dan and Ermon, Stefano and Rudra, Atri and R{\'e}, Christopher},
  booktitle={Advances in Neural Information Processing Systems},
  volume={35},
  pages={16344--16359},
  year={2022}
}

@article{ref:yang2024qwen25mathtechnicalreportmathematical,
  title={Qwen2.5-{M}ath Technical Report: Toward Mathematical Expert Model via Self-Improvement}, 
  author={An Yang and Beichen Zhang and Binyuan Hui and Bofei Gao and Bowen Yu and Chengpeng Li and Dayiheng Liu and Jianhong Tu and Jingren Zhou and Junyang Lin and Keming Lu and Mingfeng Xue and Runji Lin and Tianyu Liu and Xingzhang Ren and Zhenru Zhang},
  journal={arXiv preprint arXiv:2409.12122},
  year={2024}
}

@article{ref:yang2024qwen2,
  title={Qwen2.5 Technical Report},
  author={Yang, An and Yang, Baosong and Zhang, Beichen and Hui, Binyuan and Zheng, Bo and Yu, Bowen and Li, Chengyuan and Liu, Dayiheng and Huang, Fei and Wei, Haoran and others},
  journal={arXiv preprint arXiv:2412.15115},
  year={2024}
}

@article{ref:zhang2025pi,
  title={$\pi$-{PrimeNovo}: an accurate and efficient non-autoregressive deep learning model for de novo peptide sequencing},
  author={Zhang, Xiang and Ling, Tianze and Jin, Zhi and Xu, Sheng and Gao, Zhiqiang and Sun, Boyan and Qiu, Zijie and Wei, Jiaqi and Dong, Nanqing and Wang, Guangshuai and others},
  journal={Nature Communications},
  volume={16},
  number={1},
  pages={267},
  year={2025},
  publisher={Nature Publishing Group UK London}
}

@inproceedings{ref:wu2022autoformalization,
  title={Autoformalization with {L}arge {L}anguage {M}odels},
  author={Wu, Yuhuai and Jiang, Albert Qiaochu and Li, Wenda and Rabe, Markus and Staats, Charles and Jamnik, Mateja and Szegedy, Christian},
  booktitle={Advances in Neural Information Processing Systems},
  volume={35},
  pages={32353--32368},
  year={2022}
}

@article{ref:romera2024mathematical,
  title={Mathematical discoveries from program search with large language models},
  author={Romera-Paredes, Bernardino and Barekatain, Mohammadamin and Novikov, Alexander and Balog, Matej and Kumar, M Pawan and Dupont, Emilien and Ruiz, Francisco JR and Ellenberg, Jordan S and Wang, Pengming and Fawzi, Omar and others},
  journal={Nature},
  volume={625},
  number={7995},
  pages={468--475},
  year={2024},
  publisher={Nature Publishing Group UK London}
}

@article{ref:trinh2024solving,
  title={Solving olympiad geometry without human demonstrations},
  author={Trinh, Trieu H and Wu, Yuhuai and Le, Quoc V and He, He and Luong, Thang},
  journal={Nature},
  volume={625},
  number={7995},
  pages={476--482},
  year={2024},
  publisher={Nature Publishing Group UK London}
}

@inproceedings{
ref:xie2024monte,
title={Monte {C}arlo {T}ree {S}earch {B}oosts {R}easoning via {I}terative {P}reference {L}earning},
author={Yuxi Xie and Anirudh Goyal and Wenyue Zheng and Min-Yen Kan and Timothy P Lillicrap and Kenji Kawaguchi and Michael Shieh},
booktitle={The First Workshop on System-2 Reasoning at Scale, NeurIPS},
year={2024},
}

@inproceedings{ref:wang24mathsheperd,
    title = "Math-{S}hepherd: Verify and Reinforce {LLM}s Step-by-step without Human Annotations",
    author = "Wang, Peiyi  and
      Li, Lei  and
      Shao, Zhihong  and
      Xu, Runxin  and
      Dai, Damai  and
      Li, Yifei  and
      Chen, Deli  and
      Wu, Yu  and
      Sui, Zhifang",
    booktitle = "Proceedings of the 62nd Annual Meeting of the Association for Computational Linguistics (Volume 1: Long Papers)",
    year = "2024",
    address = "Bangkok, Thailand",
    publisher = "Association for Computational Linguistics",
    pages = "9426--9439",
}

@inproceedings{ref:hendrycks2021measuring,
  title={Measuring Mathematical Problem Solving With the {MATH} Dataset},
  author={Dan Hendrycks and Collin Burns and Saurav Kadavath and Akul Arora and Steven Basart and Eric Tang and Dawn Song and Jacob Steinhardt},
  booktitle={Advances in Neural Information Processing Systems},
  year={2021}
}

@inproceedings{ref:lightman2023let,
  title={Let's verify step by step},
  author={Lightman, Hunter and Kosaraju, Vineet and Burda, Yuri and Edwards, Harrison and Baker, Bowen and Lee, Teddy and Leike, Jan and Schulman, John and Sutskever, Ilya and Cobbe, Karl},
  booktitle={The Twelfth International Conference on Learning Representations},
  year={2023}
}

@inproceedings{ref:snell2025scaling,
  title={Scaling LLM test-time compute optimally can be more effective than scaling parameters for reasoning},
  author={Snell, Charlie Victor and Lee, Jaehoon and Xu, Kelvin and Kumar, Aviral},
  booktitle={The Thirteenth International Conference on Learning Representations},
  year={2025}
}

@article{ref:wang2024openr,
      title={Open{R}: An Open Source Framework for Advanced Reasoning with Large Language Models}, 
      author={Jun Wang and Meng Fang and Ziyu Wan and Muning Wen and Jiachen Zhu and Anjie Liu and Ziqin Gong and Yan Song and Lei Chen and Lionel M. Ni and Linyi Yang and Ying Wen and Weinan Zhang},
      year={2024},
     journal={arXiv preprint arXiv:2410.09671}
}

@article{ref:oord2018representation,
  title={Representation learning with contrastive predictive coding},
  author={Oord, Aaron van den and Li, Yazhe and Vinyals, Oriol},
  journal={arXiv preprint arXiv:1807.03748},
  year={2018}
}

@article{manvi2024adaptive,
  title={Adaptive inference-time compute: {LLMs} can predict if they can do better, even mid-generation},
  author={Manvi, Rohin and Singh, Anikait and Ermon, Stefano},
  journal={arXiv preprint arXiv:2410.02725},
  year={2024}
}

@article{wang2022self,
  title={Self-consistency improves chain of thought reasoning in language models},
  author={Wang, Xuezhi and Wei, Jason and Schuurmans, Dale and Le, Quoc and Chi, Ed and Narang, Sharan and Chowdhery, Aakanksha and Zhou, Denny},
  journal={arXiv preprint arXiv:2203.11171},
  year={2022}
}

@article{stiennon2020learning,
  title={Learning to summarize with human feedback},
  author={Stiennon, Nisan and Ouyang, Long and Wu, Jeffrey and Ziegler, Daniel and Lowe, Ryan and Voss, Chelsea and Radford, Alec and Amodei, Dario and Christiano, Paul F},
  journal={Advances in Neural Information Processing Systems},
  volume={33},
  pages={3008--3021},
  year={2020}
}

@article{yao2024tree,
  title={Tree of thoughts: Deliberate problem solving with large language models},
  author={Yao, Shunyu and Yu, Dian and Zhao, Jeffrey and Shafran, Izhak and Griffiths, Tom and Cao, Yuan and Narasimhan, Karthik},
  journal={Advances in Neural Information Processing Systems},
  volume={36},
  year={2024}
}

@inproceedings{ref:zhang2024rest,
  title={Rest-{MCTS}$^*$: {LLM} self-training via process reward guided tree search},
  author={Zhang, Dan and Zhoubian, Sining and Hu, Ziniu and Yue, Yisong and Dong, Yuxiao and Tang, Jie},
  booktitle={Advances in Neural Information Processing Systems},
  volume={37},
  pages={64735--64772},
  year={2024}
}

@inproceedings{
ref:wan2024alphazerolike,
title={AlphaZero-Like Tree-Search can Guide Large Language Model Decoding and Training},
author={Ziyu Wan and Xidong Feng and Muning Wen and Stephen Marcus McAleer and Ying Wen and Weinan Zhang and Jun Wang},
booktitle={Forty-first International Conference on Machine Learning},
year={2024},
}

@inproceedings{ref:bi2025forestofthought,
title={Forest-of-Thought: Scaling Test-Time Compute for Enhancing {LLM} Reasoning},
author={Zhenni Bi and Kai Han and Chuanjian Liu and Yehui Tang and Yunhe Wang},
booktitle={Forty-second International Conference on Machine Learning},
year={2025},
}

@article{guan2025rstar,
  title={{rStar-Math}: Small {LLMs} Can Master Math Reasoning with Self-Evolved Deep Thinking},
  author={Guan, Xinyu and Zhang, Li Lyna and Liu, Yifei and Shang, Ning and Sun, Youran and Zhu, Yi and Yang, Fan and Yang, Mao},
  journal={arXiv preprint arXiv:2501.04519},
  year={2025}
}

@misc{ref:beeching2024scalingtesttimecompute,
      title={Scaling test-time compute with open models},
      author={Edward Beeching and Lewis Tunstall and Sasha Rush},
      url={https://huggingface.co/spaces/HuggingFaceH4/blogpost-scaling-test-time-compute},
      year={2024}
}
\bibliographystyle{plainnat}

\newpage
\appendix

\section{Proofs of Section~\ref{sec:analysis}}

After sampling $N$ independent candidate solutions from the distribution, we aggregate them to obtain the output answer. We focus on characterizing the probability that the output answer is $y_1$, meaning that the output answer is a correct solution to question $q$.

Let $ \tilde{\boldsymbol{C}}^{(N)} = (\tilde{C}^{(N)}_1, \dots, \tilde{C}^{(N)}_K) $ denote the counts of each solutions after $N$ samples, so that
$\sum_{k=1}^K \tilde{C}^{(N)}_k = N$. The count random vector $ \tilde{\boldsymbol{C}}^{(N)}$ follows the multinomial distribution:
\begin{equation}
    \label{eq:prob_count_vector}
    \Pr(\tilde{\boldsymbol{C}}^{(N)} = \boldsymbol{c}) = \frac{N!}{c_1! \ldots c_K!} \prod_{k=1}^K p_k^{c_k}
\end{equation}
for any vector $\boldsymbol{c} = (c_1, \ldots, c_K)$ of natural numbers summing up to $N$.

\subsection{Majority Vote Analysis}

The majority vote selects $y_1$, the correct answer, with probability one if $c_1 > c_k$ for all $k \neq 1$, or with some probability $0 < w < 1$ if $c_1 \geq c_k$ for all $k \neq 1$ with strict equality $c_1 = c_k$ for some $k \neq 1$. The value $w$ represents the probability of choosing $y_1$ in cases of ties.

Thus, the exact probability that the majority vote selects $y_1$ is bounded by
\[
\sum_{\substack{c_1 + \cdots + c_K = N \\ c_1 > c_{k}\ \forall k \neq 1}}
\frac{N!}{c_1! \cdots c_K!} \prod_{k'=1}^K p_{k'}^{c_{k'}}
\ \leq\ 
\Pr(\text{majority vote picks } y_1)
\ \leq\ 
\sum_{\substack{c_1 + \cdots + c_K = N \\ c_1 \geq c_k\ \forall k \neq 1}}
\frac{N!}{c_1! \cdots c_K!} \prod_{k'=1}^K p_{k'}^{c_{k'}}.
\]
However, these bounds are computationally intractable for large $N$ due to the combinatorial explosion of possible vote count configurations. Moreover, they do not provide clear insight into how the selection probability changes as $N$ varies, limiting their practical utility for analytical understanding or approximation. 

We now state the theorem that bounds the probability that the count of one solution is less than or equal to another:

\begin{proposition}[Count upper-bound] \label{prop:upper}
Assume $p_a > p_b$ for any pair of distinct indices $a, b \in \{1, \dots, K\}$, $a \ne b$. Then, we have
\begin{equation}
    \Pr(\tilde{C}^{(N)}_a \leq \tilde{C}^{(N)}_b) \leq \exp\left(-N\left(\sqrt{p_a} - \sqrt{p_b}\right)^2\right).
\end{equation}
\end{proposition}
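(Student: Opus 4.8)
The plan is to reduce this multinomial tail bound to a one-dimensional Chernoff argument on a sum of i.i.d. increments. Since the $N$ candidate solutions are drawn independently from the categorical distribution $(p_1, \ldots, p_K)$, I would introduce, for each draw $i = 1, \ldots, N$, the increment $Z_i = \mathbbm{1}(X_i = b) - \mathbbm{1}(X_i = a)$, where $X_i$ denotes the $i$-th sampled answer. These $Z_i$ are i.i.d., taking value $+1$ with probability $p_b$, value $-1$ with probability $p_a$, and $0$ otherwise; moreover $\tilde{C}^{(N)}_b - \tilde{C}^{(N)}_a = \sum_{i=1}^N Z_i$. The event $\{\tilde{C}^{(N)}_a \le \tilde{C}^{(N)}_b\}$ is then exactly $\{\sum_i Z_i \ge 0\}$, which converts a combinatorial sum over vote configurations into a tail probability for a sum of independent bounded variables.

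Next I would apply the exponential Markov (Chernoff) inequality: for any $t > 0$,
\[
\Pr\Big(\sum_{i=1}^N Z_i \ge 0\Big) \le \mathbb{E}\big[e^{t \sum_i Z_i}\big] = \big(\mathbb{E}[e^{t Z_1}]\big)^N,
\]
where the factorization uses independence. A direct computation then gives the per-sample moment generating function $\mathbb{E}[e^{t Z_1}] = p_b e^{t} + p_a e^{-t} + (1 - p_a - p_b)$.

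The crux is optimizing this expression over $t > 0$. Differentiating, the minimizer satisfies $p_b e^{t} = p_a e^{-t}$, i.e. $e^{t} = \sqrt{p_a/p_b}$, which is strictly greater than one precisely because $p_a > p_b$ — this is exactly where the hypothesis enters and guarantees the Chernoff exponent is admissible. Substituting this optimal $t$ collapses the first two terms to $2\sqrt{p_a p_b}$, so that $\mathbb{E}[e^{t Z_1}] = 1 - (p_a + p_b - 2\sqrt{p_a p_b}) = 1 - (\sqrt{p_a} - \sqrt{p_b})^2$. Finally, applying the elementary inequality $1 - x \le e^{-x}$ with $x = (\sqrt{p_a} - \sqrt{p_b})^2$ and raising to the $N$-th power yields the claimed bound $\exp\big(-N(\sqrt{p_a} - \sqrt{p_b})^2\big)$.

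I expect the only mild subtlety to be bookkeeping rather than conceptual: verifying that the minimizing $t$ is strictly positive (so the one-sided bound is legitimate) and carrying out the algebraic simplification into the Hellinger-type form $(\sqrt{p_a} - \sqrt{p_b})^2$. Everything else is routine, and it is reassuring that the exponent reproduces the Bhattacharyya affinity $2\sqrt{p_a p_b}$ between the two relevant categories, confirming the bound is of the expected sharpness in the exponent.
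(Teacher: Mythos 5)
Your proposal is correct and follows essentially the same route as the paper's proof: both reduce the event $\{\tilde{C}^{(N)}_a \le \tilde{C}^{(N)}_b\}$ to a Chernoff tail bound on the sum of i.i.d.\ increments $\pm\bigl(\mathbbm{1}(X_i = a) - \mathbbm{1}(X_i = b)\bigr)$, compute the same per-sample moment generating function $p_a e^{-t} + p_b e^{t} + 1 - p_a - p_b$, optimize at $t^* = \tfrac{1}{2}\log(p_a/p_b) > 0$ (where the hypothesis $p_a > p_b$ enters identically in both arguments) to obtain the factor $1 - (\sqrt{p_a} - \sqrt{p_b})^2$, and conclude via $1 - x \le e^{-x}$. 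The only difference is your opposite sign convention for the increments, which is immaterial.
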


\begin{proof}[Proof of Proposition~\ref{prop:upper}]
For each draw $u = 1, \dots, N$, let $C_u$ denote the selected category with $\Pr(C_u = k) = p_k$. Define
\[
\tilde{C}^{(N)}_k = \sum_{u=1}^N 1_{\{C_u = k\}}, \qquad k = 1, \dots, K.
\]
Then the difference of counts between bins $a$ and $b$ can be written as
\[
\tilde{C}^{(N)}_a - \tilde{C}^{(N)}_b = \sum_{u=1}^N X_u,
\qquad
X_u := 1_{\{C_u = a\}} - 1_{\{C_u = b\}}.
\]
The variables $(X_u)_{u=1}^N$ are i.i.d., therefore for any $t > 0$, Markov's inequality implies
\[
\Pr(\tilde{C}^{(N)}_a \le \tilde{C}^{(N)}_b)
= \Pr\!\left(e^{-t(\tilde{C}^{(N)}_a - \tilde{C}^{(N)}_b)} \ge 1\right)
\le \mathbb{E}\!\left[e^{-t(\tilde{C}^{(N)}_a - \tilde{C}^{(N)}_b)}\right]
= \big(\mathbb{E}[e^{-t X_1}]\big)^N.
\]
Conditioning on $C_1$ yields
\[
\mathbb{E}[e^{-t X_1}]
= p_a e^{-t} + p_b e^{t} + (1 - p_a - p_b).
\]
Substituting this expression into the previous bound gives
\[
\Pr(\tilde{C}^{(N)}_a \le \tilde{C}^{(N)}_b)
\le \exp\!\Big(N \log\!\big(p_a e^{-t} + p_b e^{t} + 1 - p_a - p_b\big)\Big).
\]
To optimize the bound, define $h(t) = p_a e^{-t} + p_b e^{t} + 1 - p_a - p_b$.  
Minimizing $h(t)$ over $t>0$ leads to the first-order optimality condition
\[
h'(t) = -p_a e^{-t} + p_b e^{t} = 0
\quad\Longrightarrow\quad
e^{2t^*} = \frac{p_a}{p_b}
\quad\Longrightarrow\quad
t^* = \tfrac{1}{2}\log\!\left(\tfrac{p_a}{p_b}\right).
\]
Since $p_a > p_b$, we have $t^*>0$, which is valid.  
Plugging $t^*$ back into $h(t)$, we obtain
\[
h(t^*) = 1 - p_a - p_b + 2\sqrt{p_a p_b}
= 1 - (\sqrt{p_a} - \sqrt{p_b})^2.
\]
Therefore, we obtain
\[
\Pr(\tilde{C}^{(N)}_a \le \tilde{C}^{(N)}_b)
\le \left(1 - (\sqrt{p_a} - \sqrt{p_b})^2\right)^N.
\]
Finally, using $\log(1-x) \le -x$ for $x \in (0,1)$, we obtain the exponential form
\[
\Pr(\tilde{C}^{(N)}_a \le \tilde{C}^{(N)}_b)
\le \exp\!\left(-N(\sqrt{p_a} - \sqrt{p_b})^2\right).
\]
The proof is complete.
\end{proof}

We are now ready to prove Theorem~\ref{thm:majority-voting}. 

\begin{proof}[Proof of Theorem~\ref{thm:majority-voting}]
Recall that we assume $y_1$ is the correct solution. Consider the case where $p_1 > p_k$ for all $k = 2, \dots, K$. We obtain by applying Proposition~\ref{prop:upper}:
\begin{align*}
    \Pr(\text{majority vote picks } y_1 ) 
    &\geq \Pr\left( \bigcap_{k=2}^K\, \left\{ \tilde{C}^{(N)}_1 > \tilde{C}^{(N)}_k \right\} \right) \\
    &= 1 - \Pr\left( \bigcup_{k=2}^K\, \left\{ \tilde{C}^{(N)}_1 \leq \tilde{C}^{(N)}_k \right\} \right) \\
    &\geq 1 - \sum_{k=2}^K \Pr\left( \tilde{C}^{(N)}_1 \leq \tilde{C}^{(N)}_k \right) \\
    &\geq 1 - \sum_{k=2}^K e^{-N\left(\sqrt{p_1} - \sqrt{p_k}\right)^2}.
\end{align*}

Now consider the case where there exists $k \in \{2, \ldots, K\}$ such that $p_{k} > p_1$:
\begin{align*}
    \Pr(\text{majority vote picks } y_1)  
    \leq \Pr\left( \bigcap_{k=2}^K\, \left\{ \tilde{C}^{(N)}_1 \geq \tilde{C}^{(N)}_k  \right\} \right) 
    &\leq \Pr\left( \tilde{C}^{(N)}_1 \geq \tilde{C}^{(N)}_{k} \right) \\
    &\leq e^{-N\left(\sqrt{p_{k}} - \sqrt{p_1}\right)^2},
\end{align*}
where the last inequality follows from Proposition~\ref{prop:upper}. 
\end{proof}

\subsection{Aggregation using Summation of Scores}

Let $\tilde{C}^{(N)}_k$ be the count of the solution $y_k$, and let each of the $\tilde{C}^{(N)}_k$ PRM scores be independently distributed with the same distribution $\tilde S_{ku} \sim \mathcal{N}(\mu_k, \sigma_k^2)$ for all $u$. Define $\tilde{U}^{(N)}_k = \sum_{u=1}^{\tilde{C}^{(N)}_k} \tilde S_{ku}$. We now state the theorem that bounds the probability that the total score for one solution is less than or equal to that of another.

\begin{theorem}[Sum upper-bound]
\label{thm:sum_upper}
Suppose that $p_a \mu_a > p_b \mu_b$ for some distinct $a, b \in \{1, \dots, K\}$. Then
\begin{equation}
\Pr\left( \tilde U^{(N)}_a \leq \tilde U^{(N)}_b \right)
\leq \inf_{t > 0} \exp\left(
N p_a \left( e^{-t \mu_a + \frac{1}{2} t^2 \sigma_a^2} - 1 \right)
+ N p_b \left( e^{t \mu_b + \frac{1}{2} t^2 \sigma_b^2} - 1 \right)
\right).
\end{equation}
Moreover, under this condition, the right-hand side decays exponentially in $N$.
\end{theorem}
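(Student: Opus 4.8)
The plan is to mirror the exponential-moment (Chernoff) argument used in Proposition~\ref{prop:upper}, but adapted to the random-sum structure of the scores. The event of interest is $\{\tilde U^{(N)}_a \le \tilde U^{(N)}_b\} = \{\tilde U^{(N)}_b - \tilde U^{(N)}_a \ge 0\}$, so for any $t > 0$ Markov's inequality applied to $e^{t(\tilde U^{(N)}_b - \tilde U^{(N)}_a)}$ gives $\Pr(\tilde U^{(N)}_a \le \tilde U^{(N)}_b) \le \mathbb{E}[e^{t(\tilde U^{(N)}_b - \tilde U^{(N)}_a)}]$. The difficulty, relative to Proposition~\ref{prop:upper}, is that $\tilde U^{(N)}_a$ and $\tilde U^{(N)}_b$ are \emph{random sums} driven by the multinomial counts $\tilde C^{(N)}_a, \tilde C^{(N)}_b$, which are themselves dependent; a naive factorization of the joint moment generating function is therefore not available.

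The key step I would take to sidestep this is to re-express the difference as a sum of i.i.d.\ per-sample contributions. Following the per-draw construction in Proposition~\ref{prop:upper}, for each sample $u = 1, \dots, N$ let $C_u$ be its category (with $\Pr(C_u = k) = p_k$) and $S_u$ its PRM score, where $S_u \mid \{C_u = k\} \sim \mathcal N(\mu_k, \sigma_k^2)$. Then $\tilde U^{(N)}_k = \sum_{u=1}^N S_u \mathbbm{1}\{C_u = k\}$, and hence $\tilde U^{(N)}_b - \tilde U^{(N)}_a = \sum_{u=1}^N Z_u$ with $Z_u := S_u(\mathbbm{1}\{C_u = b\} - \mathbbm{1}\{C_u = a\})$. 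Because each draw is independent, the $Z_u$ are i.i.d., which restores the clean tensorization $\mathbb{E}[e^{t(\tilde U^{(N)}_b - \tilde U^{(N)}_a)}] = (\mathbb{E}[e^{t Z_1}])^N$.

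Next I would evaluate the single-sample moment generating function by conditioning on $C_1$. On $\{C_1 = a\}$ we have $Z_1 = -S_1$ with $S_1 \sim \mathcal N(\mu_a, \sigma_a^2)$, contributing $e^{-t\mu_a + \frac12 t^2 \sigma_a^2}$; on $\{C_1 = b\}$ we have $Z_1 = S_1 \sim \mathcal N(\mu_b, \sigma_b^2)$, contributing $e^{t\mu_b + \frac12 t^2 \sigma_b^2}$; on every other category $Z_1 = 0$. Weighting by the category probabilities yields $\mathbb{E}[e^{t Z_1}] = 1 + p_a(e^{-t\mu_a + \frac12 t^2 \sigma_a^2} - 1) + p_b(e^{t\mu_b + \frac12 t^2 \sigma_b^2} - 1)$. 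Applying the elementary inequality $1 + x \le e^x$ and raising to the $N$-th power produces exactly the stated bound, and taking the infimum over $t > 0$ completes the inequality.

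Finally, for the exponential-decay claim I would show the exponent is strictly negative for some admissible $t$. Writing $F(t)$ for the bracketed exponent, we have $F(0) = 0$, and a direct differentiation gives $F'(0) = -(p_a \mu_a - p_b \mu_b)$, which is negative precisely under the hypothesis $p_a \mu_a > p_b \mu_b$; hence $F(t) < 0$ for small $t > 0$, so the infimum is bounded by $e^{-cN}$ for some $c > 0$. I expect the main obstacle to be the first conceptual move—recognizing that grouping the scores by their multinomial counts is the wrong bookkeeping, and that processing the $N$ draws individually is what reduces the problem to an i.i.d.\ sum amenable to the Chernoff bound; once that reduction is in place, the remaining computations are routine.
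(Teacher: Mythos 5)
Your proposal is correct and follows essentially the same route as the paper's proof: the per-draw i.i.d.\ decomposition $\tilde U^{(N)}_b - \tilde U^{(N)}_a = \sum_u Z_u$, the Chernoff bound with the single-sample MGF computed by conditioning on the category, the Gaussian MGF, the relaxation $1 + x \le e^x$, and the decay argument via $F(0) = 0$, $F'(0) = p_b\mu_b - p_a\mu_a < 0$. The only cosmetic difference is that the paper carries independent score variables $\tilde S_{ku}$ for each category $k$ and writes $X_u = \tilde S_{au}\mathbbm{1}\{C_u = a\} - \tilde S_{bu}\mathbbm{1}\{C_u = b\}$, whereas you use one score $S_u$ with category-dependent conditional law; these yield the identical bound.
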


\begin{proof}
For each draw $u = 1, \dots, N$, let $C_u$ denote the category selected, with $\Pr(C_u = k) = p_k$.  
Let $\tilde S_{k u} \sim \mathcal{N}(\mu_k, \sigma_k^2)$ be independent across both $k$ and $u$, and independent of $(C_u)_{u=1}^N$.  
Then
\[
\tilde U^{(N)}_k = \sum_{u=1}^N \tilde S_{k u} \, 1_{\{C_u = k\}}, \qquad k = 1, \dots, K,
\]
so that
\[
\tilde U^{(N)}_a - \tilde U^{(N)}_b
= \sum_{u=1}^N X_u, 
\quad \text{where } X_u = \tilde S_{a u} 1_{\{C_u = a\}} - \tilde S_{b u} 1_{\{C_u = b\}}.
\]
The random variables $(X_u)_{u=1}^N$ are i.i.d. Hence, for any $t > 0$, Markov’s inequality yields
\[
\Pr\big( \tilde U^{(N)}_a \le \tilde U^{(N)}_b \big)
\le \mathbb{E}\!\left[e^{-t(\tilde U^{(N)}_a - \tilde U^{(N)}_b)}\right]
= \big(\mathbb{E}[e^{-t X_1}]\big)^N.
\]
Conditioning on $C_1$ gives
\[
\mathbb{E}[e^{-t X_1}]
= p_a \, \mathbb{E}[e^{-t \tilde S_{a1}}]
+ p_b \, \mathbb{E}[e^{t \tilde S_{b1}}]
+ (1 - p_a - p_b).
\]
Using the moment generating function of a normal random variable gives us
\[
\mathbb{E}[e^{t \tilde S_{k1}}] = e^{t \mu_k + \frac{1}{2} t^2 \sigma_k^2}.
\]
Finally, we obtain
\[
\mathbb{E}\!\left[e^{-t(\tilde U^{(N)}_a - \tilde U^{(N)}_b)}\right]
= \left(
p_a e^{-t \mu_a + \frac{1}{2} t^2 \sigma_a^2}
+ p_b e^{t \mu_b + \frac{1}{2} t^2 \sigma_b^2}
+ 1 - p_a - p_b
\right)^N.
\]

To simplify the expression, we use $\log(1+x) \le x$ for $x > -1$.  
We obtain
\[
\Pr\left( \tilde{U}^{(N)}_a \leq \tilde{U}^{(N)}_b \right)
\le \exp\!\left(
N p_a ( e^{-t \mu_a + \frac{1}{2} t^2 \sigma_a^2} - 1 )
+ N p_b ( e^{t \mu_b + \frac{1}{2} t^2 \sigma_b^2} - 1 )
\right).
\]
Optimizing over $t>0$, we obtain
\[
\Pr\left( \tilde{U}^{(N)}_a \leq \tilde{U}^{(N)}_b \right)
\le \inf_{t>0} \exp\!\left(
N p_a ( e^{-t \mu_a + \frac{1}{2} t^2 \sigma_a^2} - 1 )
+ N p_b ( e^{t \mu_b + \frac{1}{2} t^2 \sigma_b^2} - 1 )
\right).
\]

To verify the exponential decay, we define
\[
F(t) = p_a \left( e^{-t \mu_a + \frac{1}{2} t^2 \sigma_a^2} - 1 \right)
+ p_b \left( e^{t \mu_b + \frac{1}{2} t^2 \sigma_b^2} - 1 \right).
\]
We have $F(0) = 0$ and
\[
F'(0) = -p_a \mu_a + p_b \mu_b < 0,
\]
so for small $t > 0$, $F(t) < 0$.  
Therefore, the exponent is negative and the bound decays exponentially in $N$.
\end{proof}
We can now prove Theorem~\ref{thm:prm-vote}.
\begin{proof}[Proof of Theorem~\ref{thm:prm-vote}]

Consider the case where $p_1 \mu_1 > p_k \mu_k$ for all $k = 2, \dots, K$. Following an analogous argument as in the proof of Theorem~\ref{thm:majority-voting}, we have
\begin{equation*}
\Pr (\text{PRM\_Vote picks } y_1) \geq 1 - \sum_{k=2}^K  \inf_{t_k > 0} \exp\left( N p_1 \left( e^{-t_k \mu_1 + \frac{1}{2} t_k^2 \sigma_1^2} - 1 \right) + N p_k \left( e^{t_k \mu_k + \frac{1}{2} t_k^2 \sigma_k^2} - 1 \right) \right).
\end{equation*}

Consider the case where there exists $k$ such that $p_{k} \mu_{k} > p_1 \mu_1$. We have
\begin{align*}
\Pr(\text{PRM\_Vote picks } y_1) & \leq\Pr(\tilde U^{(N)}_k \leq \tilde U^{(N)}_1) \\
& \le \inf_{t > 0} \exp\left( N p_{k} \left( e^{-t \mu_{k} + \frac{1}{2} t^2 \sigma_{k}^2} - 1 \right) + N p_1 \left( e^{t \mu_1 + \frac{1}{2} t^2 \sigma_1^2} - 1 \right) \right).
\end{align*}
This completes the proof.
\end{proof}

\subsection{Aggregation using Maximum of Scores}

To ease the readability, we recall the setup for this section. Let $\tilde{C}^{(N)}_k$ be the count of the solution $y_k$, and let each of the $\tilde{C}^{(N)}_k$ PRM scores be independently distributed with the same distribution $\tilde S_{ku} \sim \mathcal{N}(\mu_k, \sigma_k^2)$ for all $u$. We define $\tilde{M}^{(N)}_k = \max_{1 \leq u \leq \tilde{C}^{(N)}_k} \tilde S_{ku}$.

\begin{proposition}[Max PRM upper bound] 
\label{prop:max}
Assume $\sigma_a > \sigma_b$ for some distinct pair $a, b \in \{1, \dots, K\}$, $a \ne b$. Then, for any $N \ge 1$,
\[
\Pr(\tilde{M}^{(N)}_a \leq \tilde{M}^{(N)}_b)
\le
\inf_{t \in \mathbb{R}} \Bigg\{
(1 - p_a [1 - \Phi(\cfrac{t-\mu_a}{\sigma_a})])^N 
+ 1 - (1 - p_b [1 - \Phi(\cfrac{t-\mu_b}{\sigma_b})])^N
\Bigg\}.
\]
Moreover, the right-hand side converges to zero as $N \to \infty$.
\end{proposition}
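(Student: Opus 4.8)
The plan is to prove the inequality by a threshold-splitting argument combined with a union bound, reducing the two-sided comparison to two one-sided maximum tail probabilities that factorize over the $N$ independent draws; the decay claim will then follow by selecting a threshold that grows with $N$ at a rate wedged strictly between the two Gaussian tails, which is exactly where the hypothesis $\sigma_a > \sigma_b$ enters.

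First I would fix an arbitrary threshold $t \in \mathbb{R}$ and record the deterministic event inclusion
\[
\{\tilde{M}^{(N)}_a \leq \tilde{M}^{(N)}_b\} \subseteq \{\tilde{M}^{(N)}_a \leq t\} \cup \{\tilde{M}^{(N)}_b > t\},
\]
which holds because if $\tilde{M}^{(N)}_a > t$ while $\tilde{M}^{(N)}_a \leq \tilde{M}^{(N)}_b$, then necessarily $\tilde{M}^{(N)}_b > t$. Applying the union bound reduces the task to controlling the two marginals $\Pr(\tilde{M}^{(N)}_a \leq t)$ and $\Pr(\tilde{M}^{(N)}_b > t)$.

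Next I would compute the marginals using independence of the $N$ draws. For a single draw $u$, the event that it places a score exceeding $t$ into category $a$ is $\{C_u = a,\ \tilde{S}_{au} > t\}$, with probability $p_a[1 - \Phi_a(t)]$ where $\Phi_a(t) = \Phi(\tfrac{t-\mu_a}{\sigma_a})$. Since $\tilde{M}^{(N)}_a \leq t$ is precisely the event that no draw produces such a contribution (the empty-category case $\tilde{C}^{(N)}_a = 0$ conventionally giving $\tilde{M}^{(N)}_a = -\infty \le t$), independence yields $\Pr(\tilde{M}^{(N)}_a \leq t) = (1 - p_a[1 - \Phi_a(t)])^N$, and symmetrically $\Pr(\tilde{M}^{(N)}_b > t) = 1 - (1 - p_b[1 - \Phi_b(t)])^N$. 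Substituting both into the union bound and taking the infimum over $t \in \mathbb{R}$ gives the stated inequality.

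For the convergence claim I would pick a threshold $t_N$ growing with $N$ and show both terms vanish. Using $(1 - c)^N \le e^{-Nc}$, the first term decays once $N p_a[1 - \Phi_a(t_N)] \to \infty$, while the second vanishes once $N p_b[1 - \Phi_b(t_N)] \to 0$. Invoking the Gaussian tail asymptotic $1 - \Phi(x) \asymp \phi(x)/x$, these translate into $(t_N - \mu_a)^2/(2\sigma_a^2) < \log N - \omega(1)$ and $(t_N - \mu_b)^2/(2\sigma_b^2) > \log N + \omega(1)$. Taking $t_N = \mu_a + \sqrt{(\sigma_a^2 + \sigma_b^2)\log N}$, the leading term of both squared distances is $(\sigma_a^2 + \sigma_b^2)\log N$, so the first condition requires $\sigma_a^2 + \sigma_b^2 < 2\sigma_a^2$ and the second requires $\sigma_a^2 + \sigma_b^2 > 2\sigma_b^2$, both equivalent to $\sigma_a > \sigma_b$. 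The main obstacle is this balancing step: one must confirm that a single growth rate for $t_N$ simultaneously drives the heavier $a$-tail above order $1/N$ and the lighter $b$-tail below order $1/N$, and that the subleading mean-shift (the $\mu_a - \mu_b$ correction inside $(t_N - \mu_b)^2$) and the Mills-ratio polynomial factors, each contributing only $o(\log N)$ in the exponent, do not overturn the strict inequalities. Everything upstream is an elementary union-bound-plus-independence computation.
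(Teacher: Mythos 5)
Your proposal is correct and follows essentially the same route as the paper's proof: a threshold split with the union bound, the exact product form $(1 - p_k[1-\Phi_k(t)])^N$ for the maximum-tail probabilities, an infimum over $t$, and a $\sqrt{\log N}$-rate threshold wedged strictly between the two Gaussian tail rates, which is precisely where $\sigma_a > \sigma_b$ enters. The only cosmetic differences are that you obtain the marginal CDF by per-draw factorization rather than the paper's binomial sum conditioned on the count $\tilde{C}^{(N)}_k$, and you take $t_N = \mu_a + \sqrt{(\sigma_a^2+\sigma_b^2)\log N}$ instead of the paper's $t_N = \mu_b + \sigma_b\sqrt{2(1+\varepsilon)\log N}$ with $\varepsilon$ small --- both choices pin the threshold between the two tails, and yours avoids the $\varepsilon$-tuning.
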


\begin{proof}[Proof of Proposition~\ref{prop:max}]
We start by characterizing the distribution of $\tilde{M}^{(N)}_b$.  
Conditional on $\tilde{C}^{(N)}_b = m$, we have
\[
\Pr(\tilde{M}^{(N)}_b \leq t \mid \tilde{C}^{(N)}_b = m)
= [\Phi_b(t)]^m,
\]
where $\Phi_b(t) = \Phi\!\left((t - \mu_b)/\sigma_b\right)$ is the CDF of $\mathcal{N}(\mu_b, \sigma_b^2)$.  
Substituting this into the law of total probability yields
\begin{align*}
\Pr(\tilde{M}^{(N)}_b \le t)
&= \sum_{m=0}^N \Pr(M = m)\, \Pr(\tilde{M}^{(N)}_b \le t \mid M = m) \\
&= \sum_{m=0}^N \binom{N}{m} p_b^m (1 - p_b)^{N-m} [\Phi_b(t)]^m \\
&= \sum_{m=0}^N \binom{N}{m} (p_b \Phi_b(t))^m (1 - p_b)^{N-m} \\
&= (1 - p_b + p_b \Phi_b(t))^N \\
&= (1 - p_b [1 - \Phi_b(t)])^N.
\end{align*}
Taking the complement gives
\[
\Pr(\tilde{M}^{(N)}_b > t)
= 1 - (1 - p_b [1 - \Phi_b(t)])^N.
\]
An identical argument gives
\[
\Pr(\tilde{M}^{(N)}_a \leq t)
= (1 - p_a [1 - \Phi_a(t)])^N,
\qquad
\Phi_a(t) = \Phi\!\left(\frac{t - \mu_a}{\sigma_a}\right).
\]

Next, we have that
\begin{align*}
\Pr(\tilde{M}^{(N)}_a \leq \tilde{M}^{(N)}_b)
&= \Pr(\tilde{M}^{(N)}_a \leq t \text{ or } \tilde{M}^{(N)}_b > t \text{ for all } t) \\
&\le \Pr(\tilde{M}^{(N)}_a \leq t \text{ or } \tilde{M}^{(N)}_b > t \text{ for any } t) \\
&\le \Pr(\tilde{M}^{(N)}_a \leq t) + \Pr(\tilde{M}^{(N)}_b > t),
\end{align*}
where the last inequality follows from the union bound. Hence we have the following for every $t \in \mathbb{R}$:
\begin{align*}
\Pr(\tilde{M}^{(N)}_a \leq \tilde{M}^{(N)}_b)
&\le (1 - p_a [1 - \Phi\left(\tfrac{t-\mu_a}{\sigma_a}\right)])^N + 1 - (1 - p_b [1 - \Phi\left(\tfrac{t-\mu_b}{\sigma_b}\right)])^N
\end{align*}
Since this inequality holds for all $t$, we may optimize the bound by taking the infimum over $ t \in \mathbb{R} $:
\[
\Pr(\tilde{M}^{(N)}_a \leq \tilde{M}^{(N)}_b)
\leq
\inf_{t \in \mathbb{R}}
\left\{
(1 - p_a [1 - \Phi\left(\tfrac{t-\mu_a}{\sigma_a}\right)])^N + 1 - (1 - p_b [1 - \Phi\left(\tfrac{t-\mu_b}{\sigma_b}\right)])^N
\right\}.
\]
We now analyze the asymptotic decay of the bound as $N \to \infty$.  
Let $t_N = \mu_b + \sigma_b \sqrt{2(1+\varepsilon)\log N}$ for some fixed $\varepsilon \in (0,1)$.  
We define
\[
z_{b,N} = \frac{t_N - \mu_b}{\sigma_b} = \sqrt{2(1+\varepsilon)\log N}, \qquad
z_{a,N} = \frac{t_N - \mu_a}{\sigma_a}
= \frac{\sigma_b}{\sigma_a}\sqrt{2(1+\varepsilon)\log N} - \frac{\mu_a - \mu_b}{\sigma_a}.
\]
Then, the bound evaluated at $t_N$ becomes:
\[
\Pr(\tilde{M}^{(N)}_a \leq \tilde{M}^{(N)}_b)
\leq
\inf_{t \in \mathbb{R}}
\left\{
(1 - p_a [1 - \Phi\left(z_{a,N}\right)])^N + 1 - (1 - p_b [1 - \Phi\left(z_{b,N}\right)])^N
\right\}.
\]
We use the standard Mills inequalities, valid for all $x>0$:
\[
\frac{x}{x^2+1}\varphi(x) \le 1-\Phi(x) \le \frac{\varphi(x)}{x},
\qquad \varphi(x) = (2\pi)^{-1/2} e^{-x^2/2}.
\]
We first study $1 - (1 - p_b [1 - \Phi\left(z_{b,N}\right)])^N$. For any $0 \le x \le 1$ it holds that $1-(1-x)^N \le Nx$. We use it to have
\[
1 - (1 - p_b [1 - \Phi(z_{b,N})])^N \le N p_b [1 - \Phi(z_{b,N})].
\]
Using the Mills upper bound, we obtain
\[
1 - \Phi(z_{b,N}) \le \frac{\varphi(z_{b,N})}{z_{b,N}} = \frac{1}{\sqrt{2\pi}\, z_{b,N}} N^{-(1+\varepsilon)}.
\]
Hence, we have
\[
N p_b [1 - \Phi(z_{b,N})] \le \frac{p_b}{\sqrt{2\pi}\, z_{b,N}} N^{-\varepsilon} \to 0 \qquad \text{as } N \to \infty.
\]

Next we study $(1 - p_a [1 - \Phi\left(z_{a,N}\right)])^N$. We start with the inequality
\[
0 \le (1 - p_a [1 - \Phi(z_{a,N})])^N \le \exp(-N p_a [1 - \Phi(z_{a,N})]),
\]
which holds because $1-x \le e^{-x}$ for any $x \in [0,1]$.

If $z_{a,N} \le 0$ for sufficiently large $N$, then $1 - \Phi(z_{a,N}) \ge 1/2$, so that
\[
(1 - p_a [1 - \Phi(z_{a,N})])^N \le (1 - p_a/2)^N \to 0 \quad \text{as } N \to \infty.
\]

Otherwise, if $z_{a,N} > 0$, the lower Mills bound gives
\[
1 - \Phi(z_{a,N}) \ge \frac{C}{z_{a,N}} e^{-z_{a,N}^2/2}
\]
for some constant $C>0$. Hence,
\[
N p_a [1 - \Phi(z_{a,N})] \ge C' \frac{N}{z_{a,N}} e^{-z_{a,N}^2/2}.
\]
Using the definition of $z_{a,N}$, we obtain
\[
z_{a,N}^2 = \frac{\sigma_b^2}{\sigma_a^2} \, 2(1+\varepsilon)\log N + o(\log N).
\]
Substituting into the inequality yields us
\[
N p_a [1 - \Phi(z_{a,N})] \ge C'' \, N^{1-(1+\varepsilon)\sigma_b^2/\sigma_a^2} \, e^{o(\log N)}.
\]
for some constant $C''>0$. Since $\sigma_a > \sigma_b$, we can choose $\varepsilon>0$ sufficiently small so that
\[
1-(1+\varepsilon)\frac{\sigma_b^2}{\sigma_a^2} > 0,
\]
implying that $N p_a [1 - \Phi(z_{a,N})] \to \infty$, and hence
\[
(1 - p_a [1 - \Phi(z_{a,N})])^N \to 0.
\]

Combining with the analysis for $b$, we finally obtain
\[
\Pr(\tilde{M}^{(N)}_a \le \tilde{M}^{(N)}_b) \to 0 \quad (N \to \infty).
\]
This completes the proof.
\end{proof}

We are now ready to prove Theorem~\ref{thm:prm-max}.
\begin{proof}[Proof of Theorem~\ref{thm:prm-max}]
Consider the case where $\sigma_1 > \sigma_k$ for all $k = 2, \ldots, K$. Following an analogous argument as in the proof of Theorem~\ref{thm:majority-voting}, we use Proposition~\ref{prop:max} to have
\begin{align*}
\Pr(\text{PRM\_Max} \text{ picks } y_1)
&\ge 1 - \sum_{k=2}^K \inf_{t \in \mathbb{R}} \Bigg\{
(1 - p_1 [1 - \Phi(\cfrac{t-\mu_1}{\sigma_1})])^N \\
&\hspace{3.6cm} + 1 - (1 - p_k [1 - \Phi(\cfrac{t-\mu_k}{\sigma_k})])^N
\Bigg\},
\end{align*}
where $\Phi$ is the cumulative distribution function of the standard normal distribution. By Proposition~\ref{prop:max}, each term in the summation tends to zero as $N \to \infty$. Therefore the probability that $\text{PRM\_Max}$ correctly selects $y_1$ tends to $1$.

In the alternative case, suppose there exists some $k$ such that $\sigma_k > \sigma_1$
Following an analogous argument as in the proof of Theorem~\ref{thm:majority-voting}, we use Proposition~\ref{prop:max} to have
\begin{align*}
\Pr(\text{PRM\_Max} \text{ picks } y_1)
&\le \inf_{t \in \mathbb{R}} \Bigg\{
(1 - p_k [1 - \Phi(\cfrac{t-\mu_k}{\sigma_k})])^N \\
&\hspace{3.6cm} + 1 - (1 - p_1 [1 - \Phi(\cfrac{t-\mu_1}{\sigma_1})])^N
\Bigg\}.
\end{align*}
Furthermore, by Proposition~\ref{prop:max}, the bound tends to $0$ when $N \to \infty$.

This completes the proof.
\end{proof}
\section{Universe of Methods} \label{sec:universe}
In this section, we provide a systematic description of our universe of methods $\mathcal{M}$ as mentioned in Section~\ref{sec:exp}. Each method is represented by a tuple $(\mathrm{LM}, \mathrm{ReStrat}, \mathrm{Agg}, \mathrm{Conf}, N)$, where each component is defined as follows.

\textbf{Language Model ($\mathrm{LM}$).} For our main experiments, we consider a single base model: \texttt{Qwen2.5-Math-7B-Instruct}. In Appendix~\ref{app:multi_model}, we extend our pool of methods by additionally using \texttt{Qwen2.5-Math-1.5B-Instruct}.

\textbf{Reasoning Strategies ($\mathrm{ReStrat}$).} We consider four primary groups of reasoning methods:
\begin{itemize}[leftmargin=5mm]
    \item \textbf{Greedy Search:} The deterministic strategy denoted as COT-G that always selects the highest-probability next token.
    \item \textbf{Best-of-$N$:} The model samples $N$ complete responses for each question according to the output token distribution, controlled by a temperature hyperparameter. The temperature adjusts the randomness of sampling: lower temperatures make the output more deterministic, while higher temperatures increase diversity. If $N=1$, this produces a single, randomly sampled response. When $N > 1$, aggregation strategies (described below) are applied to select a final answer from the $N$ candidates.
    \item \textbf{Beam Search:} The model first generates $N$ distinct first steps, each evaluated by a PRM. The top $N/m$ steps with the highest PRM scores are kept, where $N/m \in \mathbb{Z}$. For each retained first step, the model generates $m$ second steps, forming $N$ partial solutions. This process repeats until $N$ complete solutions are produced or we reach the maximum number of beam expansions (50 in our case).
    \item \textbf{Monte Carlo Tree Search:} This method, denoted as MCTS, formulates response generation as a tree search problem, iteratively exploring possible answers by balancing exploration and exploitation. At each step, MCTS selects the most promising node based on a selection policy (e.g., Upper Confidence Bound), expands new response candidates using the base model, evaluates them through rollouts or verifier models, and back-propagates the scores to refine future selections. The search continues until we obtain $N$ complete solutions.
    
\end{itemize}

\textbf{Configuration $(\mathrm{Conf})$.}The configuration presents all the hyperparameters of a reasoning strategy. While almost all of them are fixed as default values suggested by the language model report and repository OpenR~\citep{ref:wang2024openr}, we vary the temperature of decoding in the set $\{0.4, 0.7, 1.0\}$ to explore different levels of diversity.

\textbf{Aggregation methods ($\mathrm{Agg}$).} For reasoning strategies that produce multiple candidate responses, we employ aggregation methods to select the final answer. These methods include:
\begin{itemize}[leftmargin=5mm]
    \item Majority\_Vote (\texttt{MV}): Select the most frequently occurring answer from multiple samples.
    \item PRM\_Vote\_Min (\texttt{PVM}): For each generation, use the PRM to score each step, select the minimum score within the generation, and ultimately choose the generation with the highest \textit{sum of} minimum score across all samples.
    \item PRM\_Vote\_Last (\texttt{PVL}): For each generation, use the PRM to score each step, select the score associated with generating the last step, and ultimately choose the generation with the highest \textit{sum of} scores across all samples
    \item PRM\_Max\_Min (\texttt{PMM}): For each generation, use the PRM to score each step, select the minimum score within the generation, and ultimately choose the generation with the highest minimum score across all samples.
    \item PRM\_Max\_Last (\texttt{PML}): For each generation, use the PRM to score each step, select the score associated with generating the last step, and ultimately choose the generation with the highest score across all samples.
\end{itemize}
Notably, except for Majority\_Vote, other methods are required to call the reward model.

\textbf{Candidate solution size ($N$).} We vary the number of candidate solutions in $N \in \{1, 2, 4, 8, 16\}$.

Overall, the universe $\mathcal M$ consists of 81 reasoning models that span different reasoning strategies and configurations. The detailed composition of the universe $\mathcal{M}$ is summarized in Table~\ref{tab:universe_count}. Specifically, we construct $\mathcal{M}$ by combining reasoning strategies, decoding settings, and aggregation methods. The full enumeration yields $81$ unique configurations: $60$ Best-of-$N$ variants, $10$ Beam Search variants, $10$ MCTS variants, and one deterministic CoT-G configuration.

\begin{table}[H]
\centering
\caption{Construction of the universe $\mathcal{M}$ (81 methods) by strategy, search budget $N$, decoding temperature $\mathsf{temp}$, and aggregation $\mathrm{Agg} \in \{\texttt{MV}, \texttt{PVM}, \texttt{PVL}, \texttt{PMM}, \texttt{PML}\}$.}
\label{tab:universe_count}
\begin{tabular}{lcccc}
\toprule
Strategy & $N$ values & $\mathsf{temp}$ values & \#Agg choices & Count \\
\midrule
Best-of-$N$ & $\{2,4,8,16\}$ & $\{0.4, 0.7, 1.0\}$ & $5$ & $4 \times 3 \times 5 = 60$ \\
Beam Search & $\{2,4\}$ & 0.5 & $5$ & $2 \times 5 = 10$ \\
MCTS & $\{2,4\}$ & 0.5 & $5$ & $2 \times 5 = 10$ \\
CoT-G (greedy) & $1$ & (greedy) & N/A & $1$ \\
\midrule
\multicolumn{4}{r}{\textbf{Total}} & \textbf{81} \\
\bottomrule
\end{tabular}

\vspace{0.5ex}
\end{table}

\section{Baseline Descriptions and Complete Experimental Results} \label{app:baseline_experiment}
We present an additional ablation study focusing on another critical component of our framework: the design of the reasoning selection module. In particular, we replace our contrastive-learning and two-tower-based reasoning selection model with four baselines. 
 We first consider two baselines that do not support an accuracy-cost trade-off:
 \begin{itemize}[leftmargin=5mm]
     \item \textbf{Offline Ada-BoK} adapts the approach from~\cite{ref:damani2025learning}, which originally operates on batches of questions and manages resources at the batch level. For fair comparison, we use their `Offline allocation' strategy, modified to operate at the individual question level without requiring access to the entire batch at test time. Here, each allocation corresponds to choosing a specific reasoning configuration.

    \item \textbf{Random Allocation (RA)}: At inference time, each question randomly selects a reasoning method from the available reasoning configurations.
    
\end{itemize}

The next two baselines, along with our method, support an explicit accuracy-cost trade-off, controlled by the hyperparameter $\lambda$, which we vary over $\{0.0, 0.25, 0.5, 0.75, 1.0\}$.

\begin{itemize}[leftmargin=5mm]
     \item \textbf{Multi-class classifier (CL-$\lambda$).} In this classifier-based version, the best reasoning method for each question is still determined using the utility-driven labeling (based on the parameter $\lambda$), exactly as in our original approach. However, instead of using contrastive loss with a two-tower embedding structure, we adopt a two-layer classifier placed on top of the pretrained sentence transformer network and train with a standard cross-entropy loss, commonly used in classification scenarios. We denote this classification-based ablation as CL-$\lambda$.

    \item \textbf{Distributional Random Allocation (DRA-$\lambda$)}: Parameter $\lambda$ matches the trade-off parameter used in our approach. During training, each question is labeled with the reasoning method having the highest $\lambda$-adjusted score prediction. At inference, reasoning methods are randomly drawn from the observed training distribution.
\end{itemize}

\begin{table}[ht]
    \centering
    \caption{Average accuracy and the number of generated tokens on MATH500 for different methods and models. Methods above the blue line are either Upper Bound under $\mathcal{M}$ or not in $\mathcal{M}$. Methods above the green line are individual single-reasoning configurations (no selection module involved). Methods above the brown line do not support accuracy-cost trade-off. Below, we compare EPIC-$\lambda$, DRA-$\lambda$, and CL-$\lambda$ at different trade-off settings (groups separated by gray lines). Best results in each section are in bold.}
    \label{tab:appendix_full_main}
    \small
    \begin{tabu} to \textwidth {llcc}
        \toprule
        Base Model & Method & Accuracy $\uparrow$ & Average Token Count $\downarrow$ \\ \toprule
        Qwen2.5 72B Base & CoT$^*$~\citep{ref:yang2024qwen25mathtechnicalreportmathematical}     & 80.0  & -   \\
        QwQ 32B & CoT$^*$~\citep{ref:yang2024qwen25mathtechnicalreportmathematical}     & 83.2  & -   \\
        OpenAI-o1-mini & CoT$^*$~\citep{ref:jaech2024openai}     & 90.0  & -   \\
        Deepseek-V3 & CoT$^*$~\citep{ref:liu2024deepseek}     & 90.2  & -   \\
        Qwen2.5-Math-7B-Instruct & Upper Bound under $\mathcal M$       & 91.2 & - \\
        \tabucline[1pt blue!40 off 2pt]{-}
        Qwen2.5-Math-7B-Instruct & CoT-G        & 83.2  & 620.4   \\
        Qwen2.5-Math-7B-Instruct & Best-of-2    & 84.8  & 1242.5  \\
        Qwen2.5-Math-7B-Instruct & Best-of-4    & 86.2  & 2499.4  \\
        Qwen2.5-Math-7B-Instruct & Best-of-8    & 86.6  & 4986.8  \\
        Qwen2.5-Math-7B-Instruct & Best-of-16   & 86.8  & 10036.2 \\
        Qwen2.5-Math-7B-Instruct & MCTS         & 85.4  & 4338.1  \\
        Qwen2.5-Math-7B-Instruct & Beam-search  & 85.2  & 2638.1  \\
        \tabucline[1pt green!80 off 2pt]{-} 
        Qwen2.5-Math-7B-Instruct & RA               & 84.4 & 1752.4  \\
        Qwen2.5-Math-7B-Instruct & Offline Ada-BoK  & 87.0 & 4095.2  \\
        \tabucline[1pt brown!80 off 2pt]{-} 
        Qwen2.5-Math-7B-Instruct & DRA-0.0        & 85.6 & 1248.4  \\
        Qwen2.5-Math-7B-Instruct & CL-0.0         & 85.2 & \textbf{606.7}   \\
        Qwen2.5-Math-7B-Instruct & EPIC-0.0       & \textbf{85.8} & 892.9   \\
        \tabucline[1pt gray!80 off 2pt]{-}
        Qwen2.5-Math-7B-Instruct & DRA-0.25       & 86.2 & 2453.6  \\
        Qwen2.5-Math-7B-Instruct & CL-0.25        & 86.0 & 2275.6  \\
        Qwen2.5-Math-7B-Instruct & EPIC-0.25      & \textbf{86.4} & \textbf{1859.2}  \\
        \tabucline[1pt gray!80 off 2pt]{-}
        Qwen2.5-Math-7B-Instruct & DRA-0.5        & 86.4 & 5719.3  \\
        Qwen2.5-Math-7B-Instruct & CL-0.5         & 86.6 & 5320.2  \\
        Qwen2.5-Math-7B-Instruct & EPIC-0.5       & \textbf{86.8} & \textbf{2482.6}  \\
        \tabucline[1pt gray!80 off 2pt]{-}
        Qwen2.5-Math-7B-Instruct & DRA-0.75       & 86.4 & 7523.2  \\
        Qwen2.5-Math-7B-Instruct & CL-0.75        & 87.0 & 7524.6  \\
        Qwen2.5-Math-7B-Instruct & EPIC-0.75      & \textbf{87.6} & \textbf{3192.9}  \\
        \tabucline[1pt gray!80 off 2pt]{-}
        Qwen2.5-Math-7B-Instruct & DRA-1.0        & 87.0 & 10542.2 \\
        Qwen2.5-Math-7B-Instruct & CL-1.0         & 87.8 & 10923.4 \\
        Qwen2.5-Math-7B-Instruct & \textbf{EPIC-1.0}   & \textbf{89.4} & 6921.7 \\
        \bottomrule
    \end{tabu}
    \vspace{2mm}
    \noindent
    
    {\scriptsize $^*$Method not in $\mathcal{M}$.}
\end{table}

We observe from Table~\ref{tab:appendix_full_main} that our EPIC-$\lambda$ strongly outperforms the simpler classification version (CL-$\lambda$) and DRA-$\lambda$ at almost every level of the parameter $\lambda$. Still, the CL-$\lambda$ method remains superior to individual single reasoning configurations, confirming that our labeling strategy based on the proposed utility function is indeed effective. Moreover, our original contrastive-learning approach with a two-tower embedding structure significantly enhances scalability: introducing new reasoning methods simply involves adding a new embedding vector without retraining the entire selection module. This two-tower model and contrastive loss combination have proven highly advantageous over classification-based methods, both in terms of scalability and overall predictive performance.

\section{Ablation Studies and Additional Experiments} \label{app:ablation}
This section presents additional ablation studies to investigate the impact of various design choices in our framework. First, we examine the effect of representation embeddings and the cost-accuracy trade-off parameter $\lambda$. Second, we assess the robustness of our approach by substituting the utility function~\eqref{eq:utility} with an alternative formulation. Third, we evaluate the transferability of our framework across different language models and other in-domain datasets. Finally, we present additional experiments for the code generation task.

\subsection{The Impact of the Representation Dimension} \label{app:ablation:dimension}

In this experiment, we study the impact of the dimension of the representation space $d$ on the performance of EPIC. One could expect that larger dimensions $d$ will give a higher representation power and thus EPIC could perform better. For simplicity, we conduct the experiments only for $\lambda = 0.25$. The average test accuracy and token counts are reported in Table~\ref{tab:d_results}. We could identify a global trend that, as $d$ increases, the accuracy increases, while the average token count tends to go flat or decrease. This result empirically confirms the expectation that increasing the embedding dimension could increase the ensemble's performance. 

\begin{table}[H]
\caption{Impact of $d$ on test accuracy and average token counts with $\lambda = 0.25$.}
\vspace{2mm}
\label{tab:d_results}
\centering
\begin{small}
\begin{tabular}{cccccc}
\toprule
$d$                  & 16     & 32     & 64     & 128    \\
\midrule
Accuracy $\uparrow$            & 85.6   & 85.4   & 86.4   & 86.2   \\ 
Average token counts $\downarrow$ & 1828.3 & 2271.4 & 1859.2 & 2004.5 \\
\bottomrule
\end{tabular}
\end{small}
\end{table}

\subsection{The Impact of the Trade-off Parameter \texorpdfstring{$\lambda$}{lambda}} \label{sec:exp:impact_lambda}

Table~\ref{tab:lambda_results} illustrates the impact of $\lambda$ on accuracy and the number of generated tokens. As $\lambda$ increases from 0.00 to 1.00, we observe a consistent rise in accuracy from 85.8\% to 89.4\% and in average token counts from 892.9 to 6921.7. 
This trend indicates a clear cost-accuracy trade-off and can be visualized in Figure~\ref{fig:pareto64}, where we can identify an upward trend of the red circles.

\begin{table}[H]
\caption{Impact of $\lambda$ on test accuracy and average token counts with embedding dimension $d = 64$.}
\label{tab:lambda_results}
\centering
\begin{small}
\begin{tabular}{cccccc}
\toprule
$\lambda$ & 0.00 & 0.25 & 0.50 & 0.75 & 1.00 \\
\midrule
Accuracy $\uparrow$ & 85.8 & 86.4 & 86.8 & 87.6 & 89.4 \\
Average token counts $\downarrow$& 892.9 & 1859.2 & 2482.6 & 3192.9 & 6921.7 \\
\bottomrule
\end{tabular}
\end{small}
\end{table}

\subsection{Ablation on Utility Function} \label{app:ablation:utility}
We previously presented two ablation analyses on the embedding dimensionality in Section~\ref{app:ablation:dimension} and the trade-off parameter $\lambda$ in Section~\ref{sec:exp:impact_lambda}. To further justify our choice of utility function, we conduct an additional ablation study using an alternative utility formulation:
\[
u(a_{i,j}, c_{i,j}) = a_{i,j}^{\lambda} \times (1 - c_{i,j})^{1 - \lambda},
\]
where $a_{i,j}$ denotes accuracy and $c_{i,j}$ denotes cost for the $j$-th method on the $i$-th instance. We vary the trade-off parameter $\lambda$ over the set $\{0.0, 0.25, 0.5, 0.75, 1.0\}$.

Notably, for $\lambda = 0.0$ and $\lambda = 1.0$, this alternative function reduces to our original utility formulation. Therefore, we focus our comparison on the intermediate values $\lambda \in \{0.25, 0.5, 0.75\}$. For a better presentation, we denote our framework with this alternative utility function as PMU (Power Mean Utility).

\begin{table}[ht]
    \centering
    \caption{Comparison of PMU and EPIC methods at different $\lambda$ settings. Best results in each section are in bold.}
    \label{tab:ablation_utility_form}
    \small
    \begin{tabu} to \textwidth {lcc}
        \toprule
        Method & Accuracy $\uparrow$ & Average Token Count $\downarrow$ \\ \toprule
        PMU-0.25       & 86.0 & 2334.1  \\
        EPIC-0.25      & \textbf{86.4} & \textbf{1859.2}  \\
        \tabucline[1pt blue!40 off 2pt]{-}
        PMU-0.5        & 86.4 & 3035.4  \\
        EPIC-0.5       & \textbf{86.8} & \textbf{2482.6}  \\
        \tabucline[1pt blue!40 off 2pt]{-}
        PMU-0.75       & 87.2 & 4724.5  \\
        EPIC-0.75      & \textbf{87.6} & \textbf{3192.9}  \\
        \bottomrule
    \end{tabu}
\end{table}

As shown in Table~\ref{tab:ablation_utility_form}, EPIC consistently outperforms the DRA baseline across all evaluated PMU configurations, demonstrating its robustness under different accuracy-cost trade-offs.

\subsection{Results on Code Benchmark} \label{app:ablation:code}

To assess the generality of our proposed method beyond the math domain, we evaluate it on LiveCodeBench~\citep{ref:jain2025livecodebench}. The universe of methods includes both the Chain of Thought - Greedy (CoT-G) and Best-of-$N$ sampling strategies. In the Best-of-$N$ approach, the base model generates $N$ candidate responses per question, from which the best is selected, with $N \in \{2, 4, 8, 16\}$ and the decoding temperature chosen from ${0.2, 0.6}$. CoT-G produces step-by-step solutions using greedy decoding (temperature set to 0). For evaluation, we adopt the pass@$k$ metric, as described in~\cite{ref:li2025s}, and test two LLM base models of differing capacities: Qwen2.5-Coder-3B-Instruct and Qwen2.5-Coder-7B-Instruct~\citep{ref:yang2024qwen25mathtechnicalreportmathematical}. In the code benchmark, we do not consider aggregation methods, so the regularization parameter $\tau$ is set to 0 in this experiment. 

Table~\ref{tab:coding} reports the accuracy (pass@$k$) and average token counts for each method. We observe that our method, EPIC, achieves competitive or superior accuracy to all baselines at both $\lambda=0.25$ and $\lambda=1.0$. For the 7B model, EPIC ($\lambda=1.0$) achieves the highest overall accuracy (61.88\%), outperforming Best-of-16, while also consuming fewer tokens. Similarly, on the 3B model, EPIC ($\lambda=1.0$) achieves the best accuracy (48.01\%), exceeding the Best-of-16 baseline.

\textbf{Trade-off Control.} EPIC with $\lambda=0.25$ achieves balanced performance, providing better accuracy than CoT-G and Best-of-2, but at a modest computational cost, highlighting the framework's flexibility in managing the trade-off of accuracy and cost.

\textbf{Efficiency at Lower Cost.} Notably, CoT-G remains the most computationally efficient method, but at the expense of lower accuracy. EPIC offers a favorable balance, substantially improving accuracy while keeping generation costs well below those of aggressive sampling strategies like Best-of-16.

\begin{table}[H]
    \caption{Performance comparison of reasoning methods on Qwen2.5-Coder-3B-Instruct and Qwen2.5-Coder-7B-Instruct. The best value in each column is in bold.}
    \vspace{2mm}
    \label{tab:coding}
    \resizebox{\textwidth}{!}{
    \begin{tabular}{lcccc}
        \toprule
        \multirow{2}{*}{Method} & 
        \multicolumn{2}{c}{Qwen2.5-Coder-3B-Instruct} & 
        \multicolumn{2}{c}{Qwen2.5-Coder-7B-Instruct} \\
        \cmidrule(lr){2-3} \cmidrule(lr){4-5}
         & Accuracy $\uparrow$ & Avg. Token Count $\downarrow$ & Accuracy $\uparrow$ & Avg. Token Count $\downarrow$ \\
        \midrule
        CoT - Greedy                        & 24.85   & \textbf{580.80}   & 35.81   & \textbf{505.15}   \\
        Best-of-2 (with best temperature)    & 27.40   & 1144.73   & 41.68   & 1000.27   \\
        Best-of-4 (with best temperature)    & 32.88   & 2299.65   & 48.53   & 2011.99   \\
        Best-of-8 (with best temperature)    & 40.90   & 4672.83   & 53.82   & 4031.65   \\
        Best-of-16 (with best temperature)   & 46.38   & 9323.22   & 58.71   & 8034.65   \\
        EPIC $\lambda= 0.25$                 & 30.12   & 1025.43   & 43.44   & 813.42    \\
        EPIC $\lambda= 1.0$                  & \textbf{48.01}   & 8349.12   & \textbf{61.88}   & 7013.54   \\
        \bottomrule
    \end{tabular}
    }
\end{table}

\section{Qualitative Results} \label{app:qualitative}

\subsection{A Specific Reasoning Method Favors in Certain Questions}

This appendix provides a curated example in Table~\ref{tab:example1} for qualitative analysis. The goal is to present a case where some individual methods in the universe $\mathcal{M}$ fail, but EPIC still produces accurate answers by selecting the most suitable one. 

\begin{table}[b]
\
    \centering
    \begin{tabularx}{\textwidth}{| X |}
        \hline
        \textbf{Reference Q\&A} \\   
        Q: Evaluate \[
        \sin (\arcsin 0.4 + \arcsin 0.5) \cdot \sin (\arcsin 0.5 - \arcsin 0.4)
        \]
        
        A: From the angle addition and subtraction formulas,  
\[
\sin(x + y) = \sin x \cos y + \cos x \sin y, \quad \sin(x - y) = \sin x \cos y - \cos x \sin y,
\]
we get  
$\sin(x + y)\sin(x - y) = (\sin x \cos y + \cos x \sin y)(\sin x \cos y - \cos x \sin y) = \sin^2 x - \sin^2 y$.
For \(x = \arcsin 0.5\) and \(y = \arcsin 0.4\),  
\[
\sin(\arcsin 0.5 + \arcsin 0.4) \cdot \sin(\arcsin 0.5 - \arcsin 0.4) = 0.5^2 - 0.4^2 = 0.09 = \boxed{\frac{9}{100}}.
\]

\\ \hline

        \textbf{Best-of-16 MV} \\  To evaluate \( \sin (\arcsin 0.4 + \arcsin 0.5) \cdot \sin (\arcsin 0.5 - \arcsin 0.4) \), let \( \theta = \arcsin 0.4 \) and \( \phi = \arcsin 0.5 \), so \( \sin \theta = 0.4 \) and \( \sin \phi = 0.5 \). We need to find \( \sin (\theta + \phi) \cdot \sin (\phi - \theta) \).

Using the sine addition and subtraction formulas:
\[
\sin (\theta + \phi) = \sin \theta \cos \phi + \cos \theta \sin \phi, \quad \sin (\phi - \theta) = \sin \phi \cos \theta - \cos \theta \sin \phi,
\]
we get:
\[
\sin (\theta + \phi) \cdot \sin (\phi - \theta) = (\sin \theta \cos \phi)^2 - (\cos \theta \sin \phi)^2.
\]

Now, using \( \cos^2 \theta = 1 - \sin^2 \theta = 0.84 \) and \( \cos^2 \phi = 1 - \sin^2 \phi = 0.75 \), the expression becomes:
\[
(0.4 \cdot \sqrt{0.75})^2 - (\sqrt{0.84} \cdot 0.5)^2 = 0.16 \cdot 0.75 - 0.25 \cdot 0.84 = 0.12 - 0.21 = -0.09.
\]
Thus, the value of the expression is:
\[
\boxed{-0.09}.
\]

\\ \hline
        \textbf{EPIC selects Best-of-16 PVL} \\   
        To evaluate \( \sin (\arcsin 0.4 + \arcsin 0.5) \cdot \sin (\arcsin 0.5 - \arcsin 0.4) \), let \( \alpha = \arcsin 0.4 \) and \( \beta = \arcsin 0.5 \), so \( \sin \alpha = 0.4 \) and \( \sin \beta = 0.5 \). The expression becomes \( \sin (\alpha + \beta) \cdot \sin (\beta - \alpha) \).

Using the product-to-sum identity for sines:
\[
\sin x \cdot \sin y = \frac{1}{2} [\cos (x - y) - \cos (x + y)],
\]
we apply it to \( x = \alpha + \beta \) and \( y = \beta - \alpha \), yielding:
\[
\sin (\alpha + \beta) \cdot \sin (\beta - \alpha) = \frac{1}{2} [\cos 2\alpha - \cos 2\beta].
\]

Using the double-angle identity \( \cos 2\theta = 1 - 2\sin^2 \theta \), we get:
$\cos 2\alpha = 1 - 2\sin^2 \alpha \quad \text{and} \quad \cos 2\beta = 1 - 2\sin^2 \beta$. Substituting these, the expression becomes:
\[
\sin (\alpha + \beta) \cdot \sin (\beta - \alpha) = \frac{1}{2} \left[ (1 - 2\sin^2 \alpha) - (1 - 2\sin^2 \beta) \right].
\]

Simplifying:
\[
= \frac{1}{2} \cdot 2(\sin^2 \beta - \sin^2 \alpha) = \sin^2 \beta - \sin^2 \alpha.
\]

Substituting \( \sin \alpha = 0.4 \) and \( \sin \beta = 0.5 \):
$\sin^2 \beta - \sin^2 \alpha = (0.5)^2 - (0.4)^2 = 0.25 - 0.16 = 0.09$.

Therefore, the value of the expression is:
$
\boxed{0.09}$.

        \\ \hline
\end{tabularx}

\caption{A curated sample from the MATH benchmark. This is an example where an individual method fails, but EPIC could deliver an accurate answer by selecting the most suitable one.}
\label{tab:example1}

\end{table}

\subsection{Visualization and Analysis of Learned Embedding Space}
\begin{figure}[ht]
    \centering
    \begin{subfigure}[b]{0.49\textwidth}
        \label{fig:pca_lambda_01}
        \includegraphics[width=\textwidth]{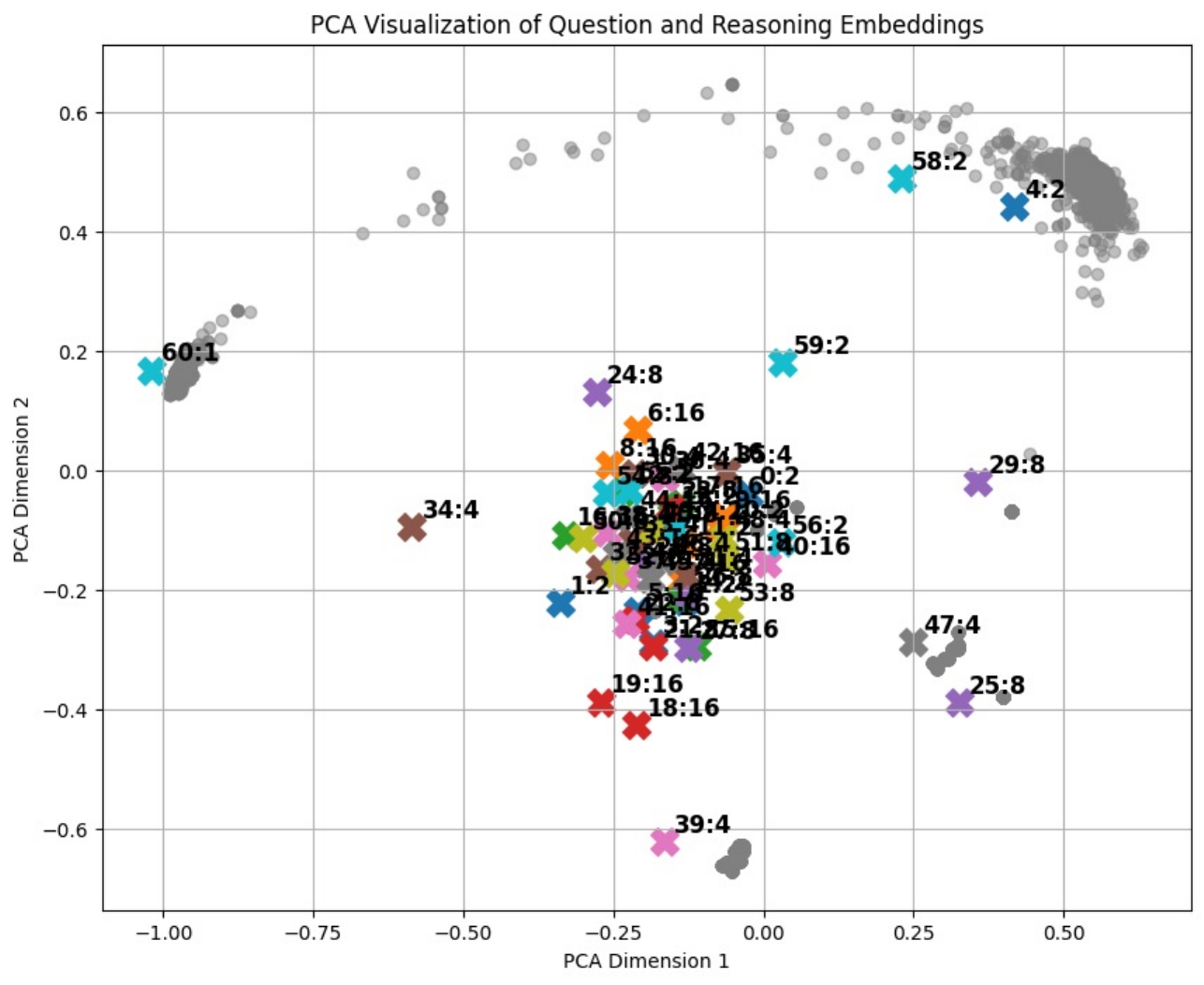}
        \caption{$\lambda = 0.25$}
        
    \end{subfigure}
    \hfill
    \begin{subfigure}[b]{0.49\textwidth}
        \includegraphics[width=\textwidth]{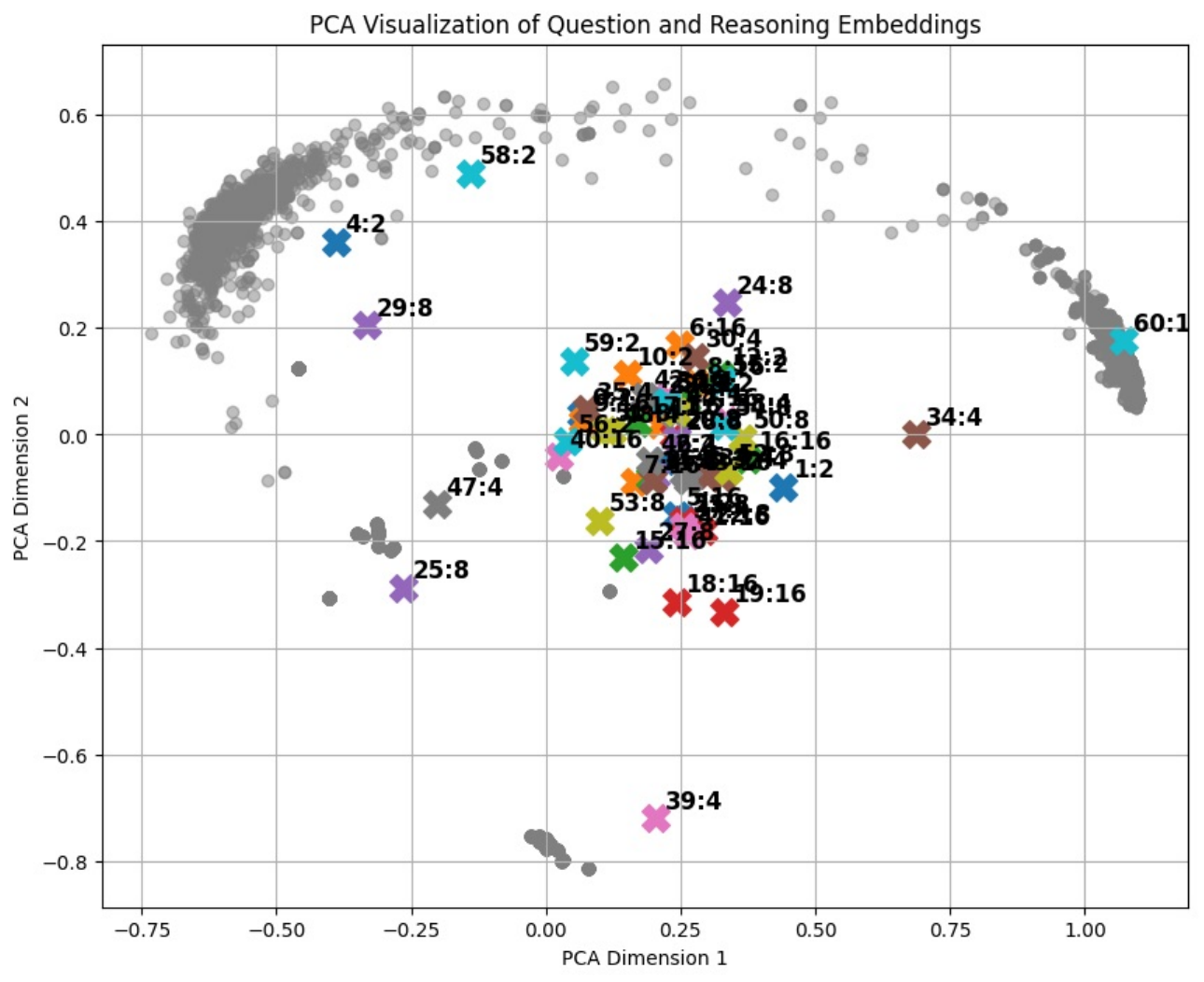}
        \caption{$\lambda = 0.5$}
        \label{fig:pca_lambda_05}
    \end{subfigure}
    \hfill
    \begin{subfigure}[b]{0.49\textwidth}
        \includegraphics[width=\textwidth]{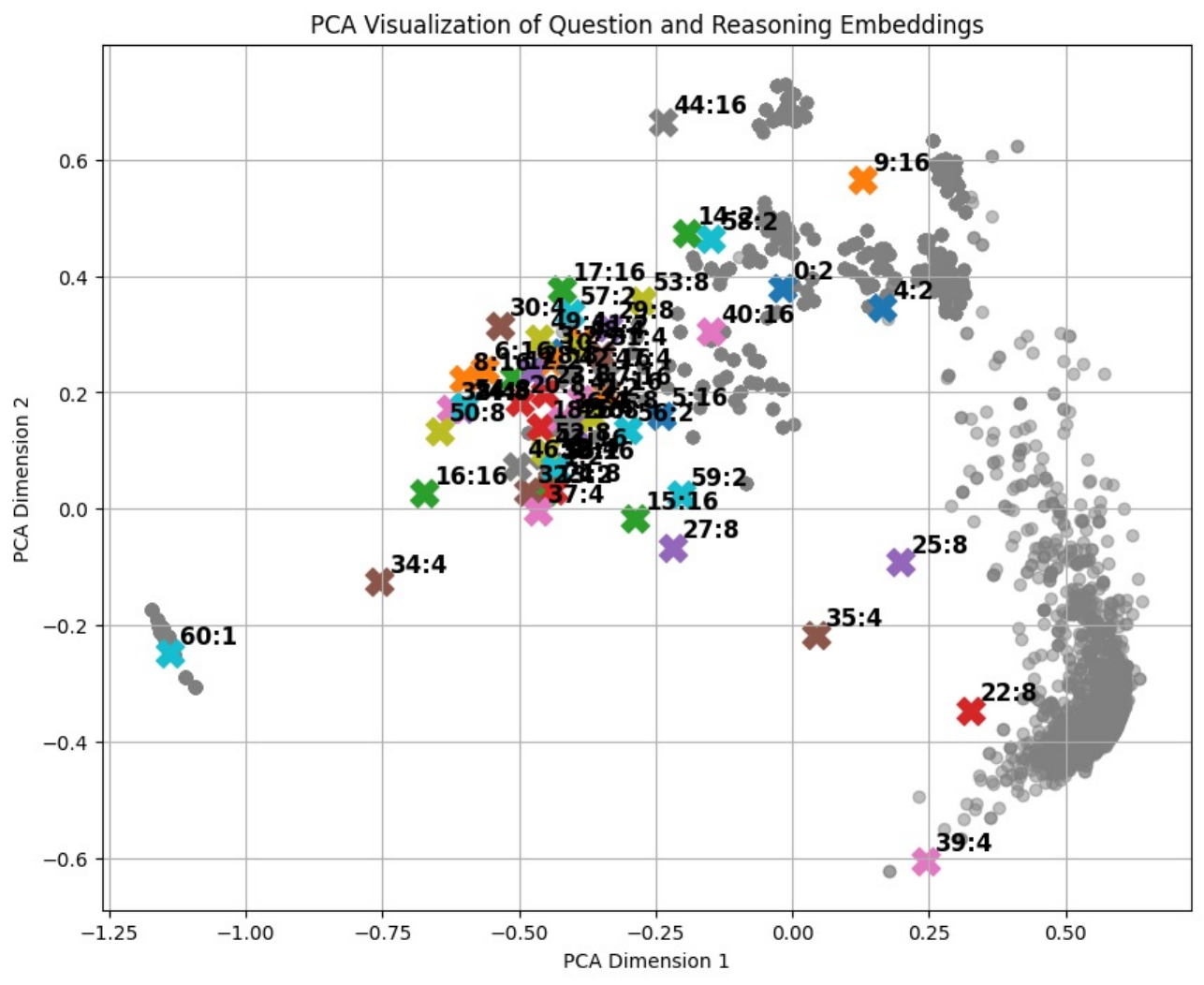}
        \caption{$\lambda = 1.0$}
        \label{fig:pca_lambda_10}
    \end{subfigure}
    \caption{PCA visualization of question (grey) and reasoning method (colored crosses) embeddings for three different settings of the utility trade-off $\lambda$. Each method is labeled by \texttt{index:num}, where \texttt{num} is the number of generated answers.}
    \label{fig:pca_all}
\end{figure}

To further understand how EPIC organizes and exploits the structure of mathematical reasoning methods and questions, we visualize the learned embedding space using Principal Component Analysis (PCA). Figure~\ref{fig:pca_all} presents three PCA plots of the question and method embeddings for different utility trade-off values: $\lambda = 0.25$, $\lambda = 0.5$, and $\lambda = 1.0$. In these plots, the grey dots represent the embedded math questions, while each colored cross denotes a reasoning method. Each method is annotated with the format \texttt{index:num}, where \texttt{index} is the method identifier and \texttt{num} is the number of generated answers for that method.

Plot~\ref{fig:pca_all} illustrate how the utility trade-off parameter $\lambda$ shapes the structure of the learned embedding space:

When $\lambda = 0.25$ (cost prioritized), method embeddings are widely scattered and tend to avoid regions dense with question embeddings. Only low-cost methods are positioned near clusters of questions.

When $\lambda = 0.5$ (equal weight to accuracy and cost), higher-cost but occasionally effective methods move closer to question clusters. This reflects a balanced trade-off, where moderately accurate and moderately costly methods are preferred.

When $\lambda = 1$ (accuracy prioritized), most methods form distinct clusters around their optimal question types. This demonstrates EPIC’s ability to match each question with the most suitable method.

These PCA visualizations confirm that EPIC organizes method and question representations according to the chosen trade-off. This structured alignment helps explain EPIC’s strong performance discussed earlier in the paper.

\end{document}